\newcommand{\ie}{{\emph{i.e.,}}\xspace}
\newcommand{\eg}{\emph{e.g.,}\xspace}
\newcommand{\etal}{\emph{et al.}\xspace}
\newcommand{\resp}{\emph{resp.}\xspace}
\newcommand{\tableincell}[1]{\begin{tabular}[x]{@{}c@{}}#1\end{tabular}}
\newtheorem{proposition}{Proposition}
\newtheorem{lemma}{Lemma}
\newtheorem{theorem}{Theorem}
\newtheorem*{proof}{Proof}
\newcommand{\stitle}[1]{\vspace{1.6ex}\noindent{\bf #1}}
\newcommand{\eat}[1]{{}}
\newcommand{\figref}[1]{{Fig.~\ref{#1}}}
\newcommand{\secref}[1]{{Sec.~\ref{#1}}}
\newcommand{\equref}[1]{{Equ.~\ref{#1}}}
\newcommand{\tabref}[1]{{Table~\ref{#1}}}
\newcommand{\lemref}[1]{{Lemma~\ref{#1}}}
\newcommand{\propref}[1]{{Proposition~\ref{#1}}}
\newcommand{\thmref}[1]{{Theorem~\ref{#1}}}
\newcommand{\appref}[1]{{Appendix}}
\newcommand{\framework}{ENFA}
\newcommand{\mathcat}{{\mathtt{cat}}}
\newcommand{\mathpool}{{\mathtt{pool}}}
\newcommand{\mathsub}{\mathtt{Sub}}
\newcommand{\mathlayer}{\mathtt{MPNN}}
\newcommand{\mathnbr}{\mathcal{N}}
\newcommand{\mathesub}{\mathtt{EgoSub}}
\newcommand{\mathpivot}{\mathtt{pvt}}
\newcommand{\mathphops}{\mathtt{phop}}
\newcommand{\mathhop}{\mathtt{hop}}
\newcommand{\mathcenc}{\mathtt{Enc}}
\newcommand{\mathsamp}{\mathtt{samp}}
\newcommand{\mathegonet}{G}
\newcommand{\todo}[1]{\textcolor{green}{#1}}
\title{Exact Acceleration of Subgraph Graph Neural Networks by Eliminating Computation Redundancy}
\author{
    %Authors
    % All authors must be in the same font size and format.
    Qian Tao\textsuperscript{\rm 1}, Xiyuan Wang\textsuperscript{\rm 2}, Muhan Zhang\textsuperscript{\rm 2}, Shuxian Hu\textsuperscript{\rm 1}, Wenyuan Yu\textsuperscript{\rm 1}, Jingren Zhou\textsuperscript{\rm 1}
}
\begin{document}

\maketitle

\begin{abstract}
Graph neural networks (GNNs) have become a prevalent framework for graph tasks.
    Many recent studies have proposed the use of graph convolution methods over the numerous subgraphs of each graph, a concept known as \emph{subgraph graph neural networks} (subgraph GNNs), to enhance GNNs' ability to distinguish non-isomorphic graphs. To maximize the expressiveness, subgraph GNNs often require each subgraph to have equal size to the original graph.
    % More recent work proposes to employ the complete graph as subgraphs to maximize the expressiveness of subgraph GNNs.
    Despite their impressive performance, subgraph GNNs face challenges due to the vast number and large size of subgraphs which lead to a surge in training data, resulting in both storage and computational inefficiencies.
    In response to this problem, this paper introduces \underline{E}go-\underline{N}ets-\underline{F}it-\underline{A}ll (ENFA), a model that uniformly takes the smaller \emph{ego nets} as subgraphs, thereby providing greater storage and computational efficiency, while at the same time guarantees identical outputs to the original subgraph GNNs even taking the whole graph as subgraphs. The key is to identify and eliminate the \textit{redundant computation} among subgraphs. For example, a node $v_i$ may appear in multiple subgraphs but is far away from all of their centers (the unsymmetric part between subgraphs). Therefore, its first few rounds of message passing within each subgraph can be computed \textit{once} in the original graph instead of being computed multiple times within each subgraph. Such strategy enables our ENFA to accelerate subgraph GNNs in an \textit{exact} way, unlike previous sampling approaches that often lose the performance.
    % a framework that transforms a subgraph GNN with any subgraph generation policy into a model on the smaller \emph{ego nets}, thereby providing greater storage and computational efficiency.
    % The proposed model is also shown to be capably extended to the subgraph GNNs with the context encoding components.
    Extensive experiments across various datasets reveal that compared with the conventional subgraph GNNs, \eat{ENFA occupies $1.41\times$ to $6.45\times$ less storage space}{ENFA can reduce storage space by $29.0\%$ to $84.5\%$} and improve training efficiency by up to $1.66\times$. 

    % Graph neural networks have become a prevalent framework for graph classification.
    % Many recent studies have proposed the use of convolution methods over the numerous subgraphs of each graph, a concept known as \emph{subgraph graph neural networks} (subgraph GNNs), to enhance the ability to distinguish non-isomorphic graphs.
    % Despite their impressive performance, subgraph GNNs face challenges due to the vast number of subgraphs which lead to a surge in training data, resulting in both storage and computational inefficiencies.
    % In response to this problem, this paper introduces \underline{E}go-\underline{N}ets-\underline{F}it-\underline{A}ll (ENFA), a framework that transforms a subgraph GNN with any subgraph generation policy into a model on the smaller \emph{ego nets}, thereby providing greater storage and computational efficiency.
    % The proposed framework is shown to have an equivalent behavior to the conventional subgraph GNNs and can extend to the subgraph GNNs with the widely studied context encoding component.
    % Extensive experiments across various datasets reveal that ENFA can reduce storage space by up to $66.4\%$ and improve training efficiency by up to $1.43\times$. 
\end{abstract}

% Uncomment the following to link to your code, datasets, an extended version or similar.
%
% \begin{links}
%     \link{Code}{https://aaai.org/example/code}
%     \link{Datasets}{https://aaai.org/example/datasets}
%     \link{Extended version}{https://aaai.org/example/extended-version}
% \end{links}

\vspace{-10pt}
\section{Introduction}
\label{sec:intro}
Graph neural networks~\cite{tnnls20survey,aiopen2020survey} have demonstrated their effectiveness in various graph prediction tasks, such as graph classification~\cite{aaai2018end,iclr18powerful} and node classification~\cite{arxiv16semi,stat17gat,nips17graphsage}.
Particularly for graph classification tasks, mainstream research~\cite{morris2019weisfeiler,icml21weisfeiler,nips21weisfeiler,nips21nested,iclr18powerful} has focused on enhancing the expressiveness of the GNN models, \ie the ability to distinguish non-isomorphic graphs, to make GNNs as effective as the Weisfeiler Lehman test (WL test)~\cite{leman1968reduction,arxiv20powersurvey, log22sawl}.

\begin{table}[t!]
    \centering
    \begin{tabular}{|c|c|c|}
        \hline
        Dataset  &  \tableincell{Raw} &  \tableincell{Generated} \\
        \hline
        PROTEINS &  $2.6$MB  &  $426$MB \\
        \hline
        moltox21 & $0.4$MB  &  $339$MB  \\
        \hline
        molhiv  &  $1.9$MB  & $3,077$MB \\
        \hline
    \end{tabular}
    \caption{Raw and generated data.}
    \vspace{-6pt}
    \label{tab:expand-storage}
\end{table}
Among them, subgraph graph neural networks (subgraph GNNs) have been proposed and attracted much attention due to their high expressiveness~\cite{arxiv23universal} and practical effectiveness~\cite{iclr22esan,nips22osan}.
Typically, a subgraph GNN operates in an extraction-and-assemble fashion: it first generates subgraphs rooted at each node or edge and treats them as independent graphs;
the representations for subgraphs are then generated by a massage-passing neural network and assembled to obtain the graph prediction.
% a massage-passing neural network is then employed to generate the representations for the subgraphs;
% the representations are finally assembled to obtain the prediction for the graph.
Recent studies choose the original graph as subgraphs to guarantee the maximum expressiveness~\cite{arxiv23universal,icml23swltest,iclr23rethinking}.

While subgraph GNNs demonstrate exceptional performance in graph classification, the massive number of subgraphs, whose size is almost as large as the original graph, need to be preprocessed and input to MPNNs independently, resulting in enormous storage cost and computational inefficiency.
\tabref{tab:expand-storage} presents the raw dataset size and the size of dataset generated by deleting every edge of the graph as a subgraph ({\ie} the edge-deleting policy~\cite{iclr22esan}) across three datasets. Compared to the raw data that is less than $3$ MB, the generated data expands by over 160$\times$ and can exceed $3,000$ MB.
Besides, the large number of subgraphs not only slows down the message passing process but also
restricts the mini-batch capacity during the training of subgraph GNNs. Practical evaluations demonstrate that subgraph GNNs can exhaust memory with a batch size of 128 on the PROTEINS dataset
on a GPU with 22.5 GB memory.

% \begin{wraptable}{r}
%     \begin{tabular}{|c|c|c|c|c|}
%         \hline
%         Dataset & \#Graph & \tableincell{Avg. \\nodes/edges}  &  \tableincell{Raw data\\(MB)} &  \tableincell{Generated\\data (MB)} \\
%         \hline
%         PROTEINS & $1,113$  & $39.1/72.8$  &  $2.63$  &  $426.4$ \\
%         \hline
%         moltox21 & $7,831$  & $18.6/19.3$  &  $0.37$  &  $338.6$  \\
%         \hline
%         molhiv  &  $41,127$  & $25.5/27.5$  &  $1.88$  & $3,077.0$ \\
%         \hline
%     \end{tabular}
%     \caption{Comparison of raw data and generated data.}
%     \label{tab:expand-storage}
% \end{wraptable}

This paper begins with an observation that subgraphs contain a substantial number
of overlapping components, leading to excessive redundant computations.
These redundant computations could alternatively be obtained from the results of the \textit{original graphs}.
\figref{fig:shared-part} illustrates a graph alongside two subgraphs under the node-marking policy when input to an MPNN layer, with the marked nodes highlighted yellow.
The node $v_i$ and its neighbors share identical features and structures across all three graphs, resulting in identical outputs for $v_i$ from the MPNN layer across three graphs.
Consequently, the embeddings for $v_i$ in the subgraphs can be eliminated from computation and directly obtained from the embeddings of the original graph.
% Besides, whether the computation of a node can be eliminated relates to its hop to the rooted node of the subgraph, leading us to imitate the behavior of the subgraph GNNs via the convolution on smaller \textit{ego nets}, which is computational and storage efficient.

\begin{figure}[t!]
\vspace{-14pt}
  \centering
  \includegraphics[width=0.9\linewidth]{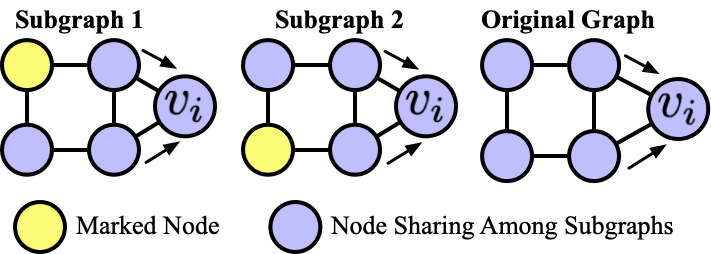}
  % \vspace{-10pt}
  \caption{Shared part produces identical outputs.}
  \vspace{-10pt}
  \label{fig:shared-part}
\end{figure}
Based on this, this paper proposes \underline{E}go-\underline{N}ets-\underline{F}it-\underline{A}ll (\framework), a model that takes the smaller ego nets as input while at the same time being able to generate identical outputs to the subgraph GNNs with full subgraphs.
% a framework that equivalently transforms a subgraph GNN with full graphs into a model operating message-passing computations on the ego-net around the rooted nodes.
Given a specific policy, {\framework} applies message passing on the original graph and the ego nets around the rooted nodes in each subgraph.
% It then inputs the ego nets and the original graph to MPNN layers.
After each MPNN layer, the embeddings of some nodes in the subgraphs (dependent on the number of layers) are identified and replaced by the embeddings from the original graph, rather than being computed, to obtain identical outputs to original subgraph GNNs.
Besides, {\framework} is also shown to be compatible with an important variant of subgraph GNNs, namely the one with subgraph message passing (also known as DSS-GNN~\cite{iclr22esan} and context encodings~\cite{nips22sun}).
Extensive experiments are conducted and show that {\framework} can significantly reduce the storage cost and training time of subgraph GNNs.

\section{Preliminaries}
\label{sec:pre}

Formally, a graph can be defined as $G=(V,E,\mathbf{X})$, where $V=\{v_1,...,v_p\}$ and $E=\{(v_i,v_j)|v_i,v_j\in V\}$ are the set of nodes and edges, respectively.
$\mathbf{X}\in \mathbb{R}^{p\times t}$ represents the features of the nodes in $G$ and we use $\mathbf{x}_i$ to represent the features of $v_i$.
% In the graph classification problem, mainstream efforts have focused on enhancing the expressiveness of graph neural networks, {\ie} their ability to differentiate non-isomorphic graphs.
% Subgraph GNNs, which enhance expressiveness by augmenting the subgraphs of each graph, have demonstrated significant capabilities in graph classification tasks~\cite{iclr22esan,nips22osan,nips22sun,arxiv22extension}.
% from both theoretical and practical perspectives~\cite{iclr22esan,nips22osan,nips22sun,arxiv22extension}.
The convolution process of a subgraph GNN can generally be formalized as follows.
\begin{align}
    \label{eq:subgraphGNN}
    &\overline{E}, \mathbf{H}^{(0)}=\mathcat(\mathsub(G)) \notag \\
    &\mathbf{H}^{(i)}=\mathlayer_i(\overline{E}, \mathbf{H}^{(i-1)}) \text{ for } 1\leq i\leq L \notag \\
    &Y=\mathpool(\mathbf{H}^{(L)})
\end{align}
\noindent $\mathsub(\cdot)$ is the subgraph generation policy that generates subgraphs from $G$, {\ie} $\mathsub(G)=\{G_{1},...,G_{q}\}$, where $G_{i}$\eat{=\{V,E_i,X_i\}} represents the $i$-th subgraph, and $q$ defines the total number of subgraphs in the policy.
The concatenation function, $\mathcat$, \eat{treats the subgraphs independently and }assembles the structures and features of subgraphs to a graph with edges $\overline{E}$ and features $\mathbf{H}^{(0)}$.
The concatenated graph is then fed to $L$ MPNN layers and a pooling function to obtain the prediction for $G$.
The $i$-th MPNN layer can be defined as
% \begin{equation}
%     \mathbf{m}_r^{(i)}= AGG_i(\{\mathbf{H}_s^{(i-1)},v_s\in \mathnbr(v_r)\}),\text{  }\mathbf{H}_r^{(i)}=COM_i(\mathbf{m}_r^{(i)},\mathbf{H}_r^{(i-1)})
% \end{equation}
% \noindent where $\mathbf{H}_r^{(i)}$ represents the internal embeddings of $v_r$ after the $i$-th MPNN layer, $\mathnbr(v)$ returns the neighbors of $v$ in the concatenated graph, and $AGG_i$ and $COM_i$ are the aggregate and combine functions in the $i$-th layer respectively.
\begin{align}
    \mathbf{m}_v^{(i)}&= AGG_i(\{\mathbf{H}_u^{(i-1)},u\in \mathnbr(v)\}) \notag \\
    \mathbf{H}_v^{(i)}&=COM_i(\mathbf{m}_v^{(i)},\mathbf{H}_v^{(i-1)})
\end{align}
\noindent where $\mathbf{H}_v^{(i)}$ represents the internal embeddings of $v$ after the $i$-th MPNN layer, $\mathnbr(v)$ returns the neighbors of $v$ in the concatenated graph, and $AGG_i$ and $COM_i$ are the aggregate and combine functions in the $i$-th layer respectively.
% Finally, the embeddings of the nodes, {\ie} $H^{(L)}$, are inputted to a pooling function $\mathpool$ to obtain the prediction for $G$.

% The specific GNN layers like GCN~\cite{arxiv16semi}, GIN~\cite{iclr18powerful}, and GraphConv~\cite{morris2019weisfeiler} can be represented by $\mathlayer$, and $\mathpool$ is the pooling function that generates the embedding of the graph.
% It should be noted that $\overline{E}_k$ and $H^{(0)}$ are used in every epoch, and thus, the first line of \equref{eq:subgraphGNN} is typically executed before the training of the model.

This paper focuses on the node-marking policy that uses the original graph as subgraphs, which achieves the maximum expressiveness among subgraph GNNs within the framework of \cite{icml23swltest}.
It also suits the node-deleting and edge-deleting policies which use subgraphs with little modifications from the original graph.
\begin{itemize}
    \item Node-marking (NM): This policy forms subgraphs by marking each rooted node as extended features, {\ie} $\mathsub_{{NM}}(G)=\{(V,E,\mathcat(\mathbf{X},\mathbf{I}_r))|1\leq r\leq |V|\}$, where $\mathbf{I}_r$ indicates a column of feature in which the $r$-th element is marked $1$ and all others are marked $0$.
    \item Node-deleting (ND): This policy generates subgraphs by deleting each node from the original graph, {\ie} 
    $\mathsub_{ND}(G)=\{(V-v_r,E-\{(v_r,v_s)|v_s\in \mathnbr(v_r)\},$ $\mathbf{X}_{\mbox{-}r})|v_r \in V\}$, where $\mathbf{X}_{\mbox{-}r}$ represents the features of nodes in $V$ except for $v_r$.
    % \item \todo{cite or not?}Edge-marking (EM): This policy generates subgraphs by marking each edge as extended features but keeps the structures unchanged, {\ie} $\mathsub_{NM}(G)=\{(V,\}$
    \item Edge-deleting (ED): This policy generates subgraphs by deleting each edge from the original graph $G$, {\ie} $\mathsub_{ED}(G)=\{(V,E-\{(v_r,v_s)\}, \textbf{X}) | (v_r,v_s)$ $\in E\}$.
    % \item Ego-Nets (EN): This policy uses the ego-nets of each node as subgraphs, {\ie} $\mathsub_{EN}(G_k)=\{(\mathego^V_h(v_r),\mathego^E_h(v_r),X_k)|v_r\in V_k\}$, where $\mathego^V_h(v_r)$ and $\mathego^E_h(v_r)$ represent the nodes and edges in the $h$-hop ego-net of $v_r$ in $G_k$, respectively.
    % \item Ego-Nets with Node-marking (ENM): Use the ego-nets of each node as subgraphs and mark the central node as extent features, \ie, $\mathsub_{ENM}(G_k)=\{(\mathego_h(v_p^k),E_k,\mathcat(X_k,I_p)|v_p^k\in V_k)\}$.
\end{itemize}
% \vspace{-10pt}

\begin{table}[t]
    \centering
    \begin{tabular}{|c|c|c|c|}
        \hline
        Policy & \tableincell{$|\mathsub(G)|$} & \tableincell{$|\mathbf{H}^{(i)}|$} & \tableincell{$|\overline{E}|$} \\
        \hline
        NM & $|V|$ & $O(|V|^2)$ & $O(|E||V|)$\\
        \hline
        ND & $|V|$ & $O(|V|^2)$ & $O(|E||V|)$\\
        \hline
        ED  &   $|E|$ & $O(|E||V|)$ & $O(|E|^2)$ \\
        \hline
    \end{tabular}
    \caption{Size of embeddings and edges.}
    \vspace{-8pt}
    % Policies and the corresponding number of subgraphs, internal embedding size, and number of edges for $G_k$.}
    \label{tab:policy-statistics}
\end{table}

\stitle{Complexity and Bottleneck.}
% Notably, the features of nodes in subgraphs also increase due to the generation of subgraphs.
% Hence, it is important to identify the part that leads to an efficiency bottleneck.
\tabref{tab:policy-statistics} summarizes the number of subgraphs (\ie $|\mathsub(G)|$), the internal embeddings of nodes (\ie $|\mathbf{H}^{(i)}|$) and the total number of edges (\ie $|\overline{E}|$) in subgraph GNNs for different policies.
Here, we treat the dimension of hidden states as a constant and $|\mathbf{H}^{(i)}|=O(|\mathsub(G)||V|)$ primarily depends on the number of nodes in subgraphs.
In terms of \emph{space complexity} where the edges of the subgraphs, \ie $\overline{E}$, and the subgraph features, \ie $\mathbf{H}^{(0)}$, should be stored, the number of edges $|\overline{E}|$ dominates the size of the subgraph features $|\mathbf{H}^{(0)}|$ when we consider dense graphs (\ie $|E|=O(|V|^2)$).
For \emph{time complexity}, the bottleneck occurs in the graph convolution of MPNN layers.
Since the time complexity of GNN layers depends on the size of the input edges ({\ie} $|\overline{E}|$) and the node embeddings (\ie $|\mathbf{H}^{(i)}|$)~\cite{tnnls20survey}, the time complexity is $O(\max\{|\overline{E}|,|\mathbf{H}^{(i)}|\})$, where $|\overline{E}|$ also dominates.
In summary, for both space and time complexity, the bottleneck of subgraph GNNs emerges from the extensive scale of the generated edges, which is super-linear to $|E|$.
\section{Motivations}
\label{sec:mot}
% Existing subgraph generation policies can be grouped into two categories: (i) \textbf{Disturbing-structure} policies, which generate subgraphs by altering the structure of the original graph, \eg edge-deleting, node-deleting, and ego-nets strategies; and (ii) \textbf{Disturbing-feature} policies, which create subgraphs by expanding the features of the original graph, for example, the node-marking policy.
In this section, we observe that due to enormous repeated parts among subgraphs, some embeddings in the subgraphs contain redundant computations and can be directly obtained from the original graph.
A noteworthy property of MPNN layers is that the output embeddings of a node $v_i$ from an MPNN layer rely solely on the input embeddings of $v_i$ and $v_i$'s neighbors.
This indicates that if a node in two graphs shares identical neighbors and input embeddings, it yields the same embedding after the layer.

Consider a subgraph GNN with node-marking policy.
As illustrated in \figref{fig:repeat-conv}, imagine a graph $G$ (the upper left part) and its subgraph $G_{1}$ (the lower left part), which takes $v_1$ as a rooted node, are input to the same MPNN layer simultaneously.
After the convolution of an MPNN layer (\ie the middle column), the embeddings in the original graph and the subgraph are identical for $v_4$-$v_7$ but differ for $v_1$-$v_3$ because:
(i) $v_1$ itself contains different input embeddings in $G$ and $G_1$ because $v_1$ is marked in $G_1$, %, resulting in different embeddings of $v_1$ after the convolution of the first layer;
(ii) both $v_2$ and $v_3$ have a neighbor ({\ie} $v_1$) that has different embeddings in $G$ and $G_1$,
% Thus their embeddings varied in $G$ and $G_1$;
(iii) the neighbors of $v_4$-$v_8$, as well as their embeddings, are identical in $G$ and $G_1$.
% Consequently, their embeddings from the MPNN layer are identical.
Similarly, after another MPNN layer (\ie the right column in \figref{fig:repeat-conv}), the embeddings of $v_5$-$v_8$ will be identical in $G$ and $G_1$, while the embeddings of $v_4$ will differ.
% Two observations can be drawn:
We can draw two key observations from \figref{fig:repeat-conv}:

\begin{figure}
  \centering
  \includegraphics[width=1.01\linewidth]{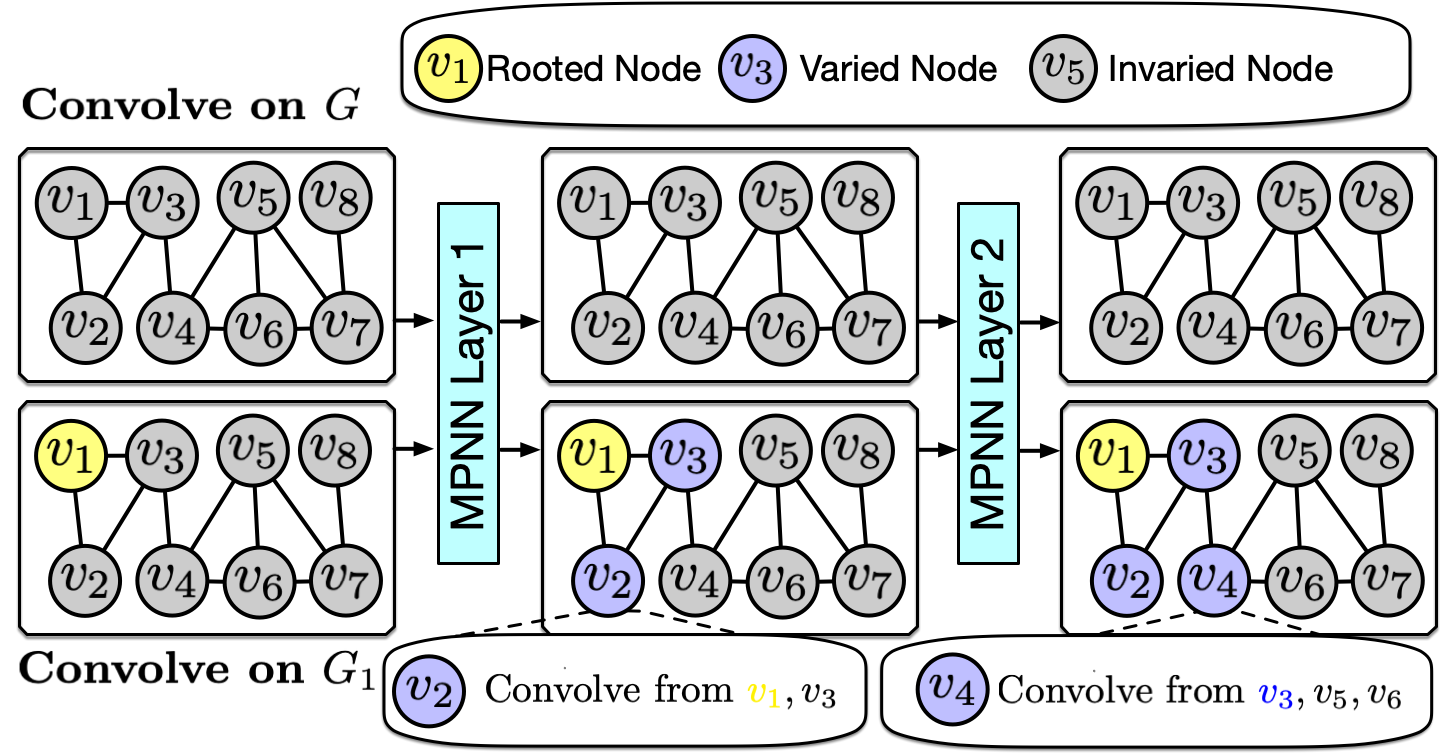}
  \caption{Redundant computations of subgraph GNNs.}
  \vspace{-10pt}
  \label{fig:repeat-conv}
\end{figure}

\stitle{Redundant computations exist.}
The embeddings of a node in different subgraphs may be identical to those in the original graph.
However, in the conventional subgraph GNNs, these embeddings are always obtained via independent convolutions.
This suggests that we can leverage the convolution on the original graphs to eliminate some computations and accelerate subgraph GNNs with exact outputs.

\stitle{Distance to the rooted node matters.}
The equivalence of a node's embedding in a subgraph to that in the original graph depends on the number of MPNN layers and the node's hop to the rooted node in the subgraph.
For the $l$-th MPNN layer, those nodes with hops greater than $l$ to the rooted node will have the same embeddings as those in the original graph, and otherwise the opposite.
% For example, for the subgraph that deletes the edge $(v_2,v_3)$ in \figref{fig:repeat-conv}, the embeddings of $v_6$-$v_9$ will remain unchanged after the first layer as their hops to $v_2$ and $v_3$ exceed $1$.

\section{Ego-Nets-Fit-All Framework}
\label{sec:alg}
In this section, we give necessary definitions in {\framework}, then present the details of {\framework}, and finally discuss {\framework}'s exact acceleration capability and extensibility to other variants.

\subsection{Definitions}
\label{subsec:enfa-def}
\eat{From the above discussion\eat{\secref{sec:mot}}, t}
The equivalence of a node's embeddings in a subgraph and the original graph depends on the node's hop to those nodes with altered features or neighbors.
Thus, we introduce the definitions of \textit{pivot nodes} and \textit{pivot hops}.

Given a graph $G=(V,E,\mathbf{X})$ and one of its subgraphs $G_{j}=(V_j,E_{j},\mathbf{X}_{j})$,
the pivot nodes of $G_{j}$ relative to $G$ are those nodes whose neighbors or features differ in $G_{j}$ and $G$.
    Formally, we define $\mathpivot(G_{j})=\{v_r|\mathnbr(v_r)\neq \mathnbr_{j}(v_r)\lor \mathbf{X}[v_r]\neq \mathbf{X}_j[v_r], v_r\in V\}$, where $\mathnbr_j(v_r)$ represents the neighbors of $v_r$ in $G_j$, and $\mathbf{X}[v_r]$ and $\mathbf{X}_{j}[v_r]$ represent the features of $v_r$ in $G$ and $G_{j}$, respectively.
    Furthermore, define $\mathpivot(G, \mathsub)=\{\mathpivot(G_{j})|G_{j}\in\mathsub(G)\}$ as the set of pivot nodes for all subgraphs derived from $G$ according to the subgraph policy $\mathsub$.

% \begin{table}
%     \centering
%     \begin{tabular}{|c|c|c|}
%         \hline
%         Policy & Subgraph & Pivot Nodes \\
%         \hline
%         ED  &   $(V,E-\{(v_r,v_s)\},X_)$ & $\{v_r,v_s\}$ \\
%         \hline
%         ND & $(V,E-\{(v_r,v_s)|v_s\in\mathnbr(v_r)\},X)$ & $\mathnbr_k(v_r)$ \\
%         \hline
%         NM & $(V,E,\mathcat(X,I_r))$ & $\{v_r\}$ \\
%         \hline
%         % EN & $(\mathego^V_h(v_r), \mathego^E_k(v_r), X_k)$ & $\{v_r\}$ \\
%         % \hline
%     \end{tabular}
%     \caption{Pivot nodes for different policies.}
%     \label{tab:pivot-nodes}
% \end{table}

Based on the definition of pivot nodes, we further define pivot hops to track the minimum number of hops to the pivot nodes for each node in $G_{j}$.
Formally, $\mathphops(G_{j})=\langle\min\{\mathhop^j(v_1,v_i)|v_i\in \mathpivot(G_j)\},...,\min\{\mathhop^j(v_{p},v_i)$ $|v_i\in \mathpivot(G_j)\}\rangle$ where $\mathhop^j(v_r,v_s)$ signifies the number of hops between $v_r$ and $v_s$ in $G_{j}$.
Furthermore, define $\mathphops(G, \mathsub)=\{\mathphops(G_{j})|G_{j}\in \mathsub(G)\}$ as the set of pivot hops for all subgraphs derived from $G$.

If the context is clear, we use the abbreviations $pvt_{j}$, $pvt$, $phop_{j}$, and $phop$ as shorthand for $\mathpivot(G_{j})$, $\mathpivot(G,\mathsub)$, $\mathphops(G_{j})$, and $\mathphops(G,\mathsub)$, respectively.
Consider the node-marking and edge-deleting policies\eat{ in \secref{sec:pre}} as examples.
For the node-marking policy, only the rooted node has altered features in each subgraph, which is treated as the pivot node.
On the other hand, for the subgraph by deleting an edge $(v_r,v_s)$ in the edge-deleting policy, the endpoints of the deleted edge, $\{v_r,v_s\}$, experience a change in their neighbors, thereby serving as the pivot nodes.

\subsection{Design of \framework}
\label{subsec:enfa-design}
% From the discussion in \secref{sec:pre}, we know that some embeddings in the subgraphs can be derived from the convolution of the original graph.
The insight from \figref{fig:repeat-conv}\eat{ \secref{sec:pre}} motivates us to utilize convolutions on the original graph to enhance the efficiency of the subgraph GNNs, 
leading to the design of ENFA.
% Besides, under the ENFA framework, \emph{any subgraph policy} can be transformed into an ego net generation policy, which not only decreases the storage cost of the data but also improves the computation efficiency of subgraph GNNs.

\stitle{Basic Idea.}
The basic idea is to identify nodes whose internal embeddings remain identical to those in the original graph, based on pivot hops, after each MPNN layer convolution.
Thus, the internal embeddings of these nodes can be directly derived from the original graph, rather than being independently computed.
% Besides, the identification of these nodes relies on their pivot hops in the subgraphs.
On the other hand, nodes whose embeddings cannot be obtained from the original graph form the ego nets around the pivot nodes.
Consequently, ENFA only employs convolution on ego nets and the original graph, which is both storage and computation efficient.
\eat{
Consequently, ENFA uses the convolution on ego nets around the pivot nodes to replace the convolution on the whole subgraphs, which is storage and computation efficient.}

% \vspace{-12pt}
\begin{align}
    &\mathbf{H}_o^{(0)}=\mathbf{X};\quad \overline{E}, \mathbf{H}^{(0)}=\mathcat(\mathesub_{L+1}(pvt)) \label{eq:enfa-init} \\
    &\left.
    \begin{aligned}
        &\mathbf{H}_o^{(i)}=\mathlayer_i(E, \mathbf{H}_o^{(i-1)})\\
        &\mathbf{H}^{(i)}=\mathlayer_i(\overline{E}, \mathbf{H}^{(i-1)})  \\
        &\mathbf{H}_{j}^{(i)}[V_{j}^{>i}] = \mathbf{H}_o^{(i)}[V_{j}^{>i}] (1\leq j \leq q) \\
    \end{aligned}
    \right\} 1\leq i\leq L \label{eq:enfa-conv} \\
    &Y=\mathpool(\mathbf{H}^{(L)}) \label{eq:enfa-pool}
\end{align}
% \vspace{-12pt}

\begin{figure*}[t!]
  \centering
  \includegraphics[width=0.97\linewidth]{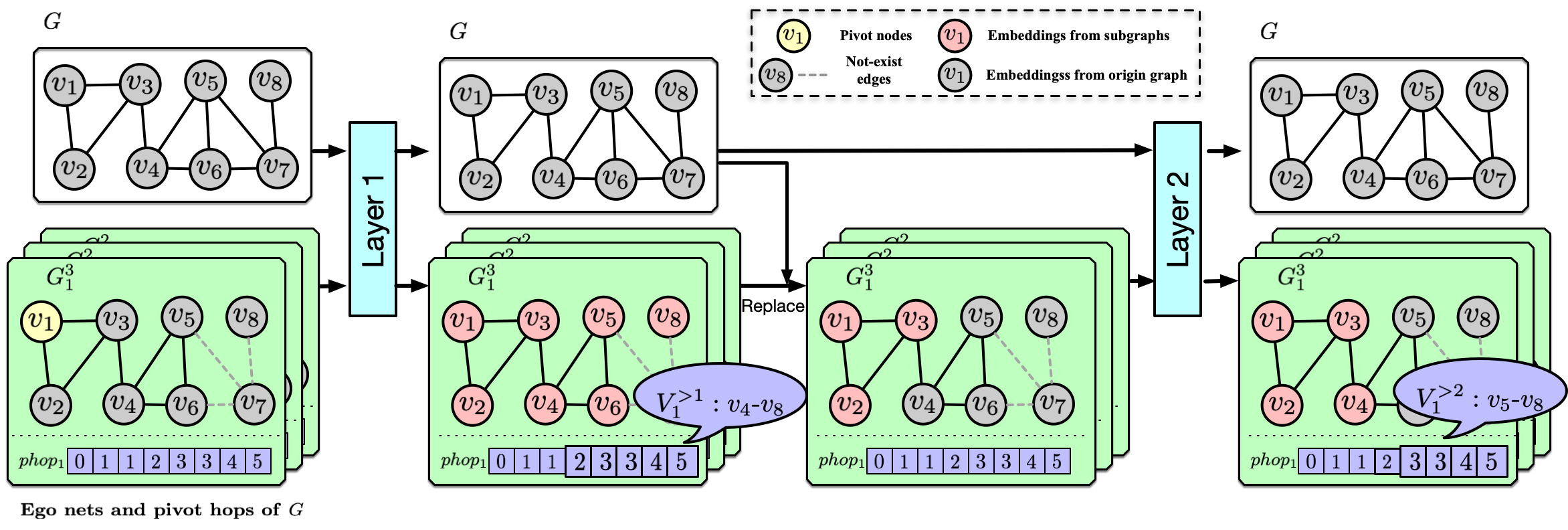}
  \caption{An example workflow of ENFA for node-marking policy.}
\vspace{-10pt}
  \label{fig:enfa-example}
\end{figure*}

\equref{eq:enfa-init}-\equref{eq:enfa-pool} illustrate the workflow of {\framework}.
Different from the conventional subgraph GNN in \equref{eq:subgraphGNN}, {\framework} employs convolutions on both the (smaller) ego nets and the original graph.
Specifically, \equref{eq:enfa-init} first initializes the input of the original graph convolution, $\mathbf{H}_o^{(0)}$, with the original features in $G$.
For the convolution on subgraphs, it generates the $(L+1)$-hop ego nets around each set of pivot nodes based on a unified subgraph generation policy $\mathesub_h$.
Formally, $\mathesub_h(pvt)=\{\mathegonet_{j}^h|1\leq j\leq q\}$, where $\mathegonet_{j}^h$ represents the $h$-hop ego net of the $j$-th pivot nodes of $pvt$ in the $j$-th subgraph $G_j$.
% $\mathesub_h(pvt_k)=\{\mathego_h^j(pvt_{k,j})|pvt_{k,j}\in pvt_k\}$, where $\mathego_h^j$ obtains the $h$-hop ego net of the input nodes in $G_{k,j}$.
The generated ego nets and features are then assembled into $\overline{E}$ and $\mathbf{H}^{(0)}$, respectively.
Compared to the generated edges in \equref{eq:subgraphGNN}, $\overline{E}$ only contains the edges of the ego nets, resulting in less storage and more efficient computation than the conventional subgraph GNNs.
%Note that in ENFA, we employ a unified subgraph generation policy $\mathesub_h$ which takes as input the pivot nodes as well as the origin subgraphs.
%$\mathesub_h$ will always generate the ego nets around the pivot nodes on subgraphs and the convolution will only rely on the ego nets excluding the origin subgraphs.
% Similarly to the first line of \equref{eq:subgraphGNN}, \equref{eq:enfa-init} will be precomputed before the model training.

As shown in \equref{eq:enfa-conv}, the embeddings on the original graph and generated subgraphs are processed through $L$ MPNN layers to obtain the internal embeddings $\mathbf{H}_o^{(i)}$ and $\mathbf{H}^{(i)}$, respectively.
Besides, after the convolution of the $i$-th layer, nodes with pivot hops larger than $i$ in $G_{j}$ are identified, denoted as $V_{j}^{>i}$.
The embeddings of these nodes in the subgraphs are then replaced by the embeddings from the original graph rather than being computed, as indicated in the third line of \equref{eq:enfa-conv}.
Here, $\mathbf{H}_j^{(i)}$ defines the embeddings of the $j$-th subgraph in $\mathbf{H}^{(i)}$.
% As shown in the third line in \equref{eq:enfa-conv}, we use  and $H^{(i)}[V_{k,j}^{>i}]$ (\resp $H_o^{(i)}[V_{k,j}^{>i}]$) to represent the internal embeddings of these nodes in $G_{k,j}$ (\resp $G_k$) after the $i$-th layer.
% We will demonstrate that this strategy accurately identifies nodes whose embeddings remain changed in \secref{subsec:correctness}.
% based on the hop information $hop$, where $hop$ records the minimum hop to the pivot nodes of each node and $hop_{>i}$ marks the nodes with hops to pivot nodes larger than $i$.
Lastly, the embeddings for $G$ are derived from the subgraph embeddings, as shown in \equref{eq:enfa-pool}, which is identical to the process in \equref{eq:subgraphGNN}.

\figref{fig:enfa-example} illustrates the workflow of {\framework} for a subgraph GNN encompassing in total $2$ MPNN layers and incorporating a node-marking policy.
For instance, for $G_{1}$ with pivot node $\{v_1\}$, {\framework} generates $3$-hop ego net $G_1^3$ and pivot hops $phop_{1}$ around $v_1$, as shown in the first column of the figure.
The original graph and the generated ego nets are input into the MPNN layer to generate the internal embeddings.
Nodes with pivot hops larger than $1$ are then identified and their embeddings are replaced by the internal embeddings from the original graph.
As depicted in the second and third columns of \figref{fig:enfa-example}, for the ego net $G_1^3$, the embeddings of $v_4$-$v_8$ are replaced.
The internal embeddings are then input to the second MPNN layer.
For the second layer, $v_4$ is affected by the pivot node in the subgraph and only the embeddings of $v_5$-$v_8$ are replaced, as depicted in the fourth column\eat{ of \figref{fig:enfa-example}}.
% As the layer number increases, fewer nodes undergo this replacement.
%, the extent of which depends on $pvt$ and $phop$.

\stitle{Complexity Analysis.}
Under the transformation of {\framework}, each subgraph is derived from the ego net of the pivot nodes.
In other words, the total edge size of the generated graph of $G$ is at most $|\overline{E}|=O(pvt_{max}d^{L+1}|\mathsub(G)|)$ where $d$ represents the maximum degree in $G$ and $pvt_{max}=\max_{1\leq j\leq q}\{|pvt_{j}|\}$ represents the maximum size of $G$'s pivot nodes.
Since the common policies shown in \tabref{tab:policy-statistics} limit the size of pivot nodes to at most $d$ and GNNs typically contain few layers ({\eg} $2$-$4$), we can perceive $|\overline{E}|$ to be linearly dependent on $|E|$ with a bounded $d$ and $L$.
Thus, $|\mathbf{H}^{(i)}|$ in \tabref{tab:policy-statistics} dominates $|\overline{E}|$ for the three policies.
For \emph{storage complexity}, the size of features, \ie $|\mathbf{H}^{(0)}|$, is no less than $|\overline{E}|$ for dense graphs.
For \emph{time complexity}, the running time can be regarded as $O(\max\{|\overline{E}|,|\mathbf{H}^{(i)}|\})=O(|\mathbf{H}^{(i)}|)$.
% For \emph{storage complexity}, the size of features, \ie $|H^{(0)}|$ in \tabref{tab:policy-statistics}, is no less than $|\overline{E}|$ for dense graphs.
% For \emph{time complexity}, since $|\overline{E}|$ is linearly dependent on $|E|$, the running time can be regarded as $O(\max\{|\overline{E}|,|H^{(i)}|\})=O(|H^{(i)}|)$.
In summary, under the ENFA model, the edges of subgraphs $\overline{E}$ are reduced to a linear scale of $|E|$ and no longer the bottleneck for storage and computation.

\stitle{Reusability of Pivot Hops.}
Although {\framework} specifies the number of MPNN layers,
its structure enables the reuse of pivot hops for subgraph GNNs incorporating varying numbers of MPNN layers.
Specifically, for a vertex $v$, its pivot hop in the $j$-th subgraph remains consistent for subgraph GNNs with any number of MPNN layers as long as $v$ is in the ego nets.
This suggests that larger ego nets and the corresponding pivot hops can be directly employed for any subgraph GNNs with fewer layers.
Our evaluation reveals that for a subgraph GNN with $5$ layers, which is already a considerable number in practice, {\framework} still exhibits notable improvement in space efficiency and execution speed\eat{ save up to $78.3$\% space and accelerate by up to $1.43\times$}.

\stitle{Remarks.}
We offer the following remarks. 
(1) \emph{Applications of {\framework}}.
{\framework} can be applied to subgraph GNNs with other policies that generate subgraphs with minimal modifications from the original graph, such as the edge-marking~\cite{arxiv23count} and reconstruction policy~\cite{nips21reconstruction}.
It is also flexible to commonly-used subsampling strategies for subgraph GNNs~\cite{iclr22esan,nips22osan,nips23mag}\eat{ and can also be incorporated into previous studies where subgraph GNNs serve as embedded components\todo{cite}}.
Please refer to \appref{sec:append-discuss} for additional details.
(2) \emph{Contrast with Redundant Computation Reduction in General GNNs.}
The efficiency gains from {\framework} stem from the distinctive structure of subgraph GNNs: the original graph is replicated into independent subgraphs.
Thus, the redundant computations of a node \emph{across subgraphs} exist and can be eliminated.
This significantly differs from prior works, such as \cite{kdd20redfree}, which aimed at reducing overlapping aggregations in general GNNs, where certain nodes may share similar neighbors and some partial aggregations can be ignored.

\eat{
\stitle{Remarks on Limitations.}
In broad terms, {\framework} can also improve efficiency for other policies that generate subgraphs with minimal modifications from the original graph, like the edge-marking policy~\cite{arxiv23count}.
On the other hand, since the primary enhancement derives from the size difference between ego nets and subgraphs, the effectiveness of ENFA may \textit{be limited} when applied to policies that generate small subgraphs, like the ego-net policy and re-construction policy~\cite{nips21reconstruction} with small subgraph size.
Detailed discussions can be found in \appref{sec:append-discuss}.
}
% It is noteworthy to distinguish {\framework} from subgraph GNNs with ego-net policy. 
% The conventional subgraph GNNs with ego-net policy simply involve message passing on ego nets, with unguaranteed expressiveness compared to the maximum expressiveness among all policies~\cite{icml23swltest}.
% In comparison, by a combination of message passing on ego nets and the original graphs, {\framework} can theoretically emulate identical outputs of subgraph GNNs with several policies, including the one with the maximum expressiveness. 

\subsection{Discussions on {\framework}}
\label{subsec:correctness}
% This section discusses the equivalence of the {\framework} framework to the subgraph GNNs and its extensibility to DSS-GNN, {\ie}, the subgraph GNNs with information-sharing component.

\subsubsection{Exact Acceleration of Subgraph GNNs}
We show that under {\framework}, the embeddings of each node in each subgraph are identical to those in the conventional subgraph GNN after the $L$-th layer.
Thus, {\framework} can fully emulate the behavior of subgraph GNNs and achieve an exact acceleration. %generate identical outputs to subgraph GNNs.

\begin{theorem}
    \label{thm:correctness}
    The internal embeddings of nodes in $j$-th ego net in {\framework} are identical to those in $j$-th subgraph in the conventional subgraph GNN after $L$ layers.
    Consequently, ENFA can produce the same embeddings $Y$ for $G$ as in the conventional subgraph GNN.
\end{theorem}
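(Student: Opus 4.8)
The plan is to prove the statement by induction on the MPNN layer index, comparing, node by node, the \framework{} embeddings against those of the conventional subgraph GNN of \equref{eq:subgraphGNN}. Throughout, let $\tilde{\mathbf{H}}_j^{(i)}$ denote the internal embeddings of $G_j$ produced by the conventional subgraph GNN after $i$ layers, while keeping the notation $\mathbf{H}_j^{(i)}$ and $\mathbf{H}_o^{(i)}$ for the \framework{} quantities in \equref{eq:enfa-conv}. The first easy observation I would record is that $\mathbf{H}_o^{(i)}$ is exactly the embedding of a plain MPNN run on $G$, since the first line of \equref{eq:enfa-conv} never invokes the replacement step; this lets $\mathbf{H}_o^{(i)}[v]$ serve as the canonical ``original-graph'' embedding of $v$.

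The crux is a locality lemma that I would establish first: in the conventional subgraph GNN, any node $v$ whose hop distance to every pivot node of $G_j$ exceeds $i$ satisfies $\tilde{\mathbf{H}}_j^{(i)}[v]=\mathbf{H}_o^{(i)}[v]$. This is precisely the ``distance to the rooted node matters'' observation, and I would prove it by induction on $i$ using the locality of an MPNN layer, namely that a node's output depends only on its own input embedding and those of its neighbors. For the base case, a node at distance $>0$ from all pivots is by definition not a pivot, so both its feature and its neighborhood agree with $G$. For the step, if $v$ is at distance $>i$ then every neighbor sits at distance $>i-1$, so the induction hypothesis forces all relevant input embeddings to agree with those on $G$; since $v$ is a non-pivot node we also have $\mathnbr_j(v)=\mathnbr(v)$, and applying the same $AGG_i,COM_i$ yields $\mathbf{H}_o^{(i)}[v]$.

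With this lemma in hand, I would run the main induction on $i\le L$ to show $\mathbf{H}_j^{(i)}[v]=\tilde{\mathbf{H}}_j^{(i)}[v]$ for every node $v$ of the $(L+1)$-hop ego net, with the base case being the matching of initial features. For the inductive step I would split on whether $v\in V_j^{>i}$. If $v\in V_j^{>i}$ (pivot hop $>i$), the third line of \equref{eq:enfa-conv} overwrites $\mathbf{H}_j^{(i)}[v]$ with $\mathbf{H}_o^{(i)}[v]$, which equals $\tilde{\mathbf{H}}_j^{(i)}[v]$ by the lemma. If instead $v\notin V_j^{>i}$, then $v$ lies at distance $\le i\le L$ from some pivot, so all of its neighbors lie within $L+1$ hops and are present in the ego net with the same adjacency as in $G_j$; the induction hypothesis then guarantees the input embeddings of $v$ and its neighbors agree, and applying $\mathlayer_i$ on the ego net reproduces $\tilde{\mathbf{H}}_j^{(i)}[v]$ exactly.

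Finally, for the pooling claim I would observe that every node of $G_j$ lying outside the $(L+1)$-hop ego net is at distance $>L$ from all pivots, so by the lemma its conventional embedding after $L$ layers is just $\mathbf{H}_o^{(L)}[v]$, which \framework{} has already computed on $G$; combining these with the matched ego-net embeddings reconstructs the full multiset fed to $\mathpool$, so $Y$ is identical. The hard part will be the step case for near nodes: it is here that the choice of an $(L+1)$-hop rather than $L$-hop ego net becomes essential, since a node at distance exactly $L$ is still updated at the final layer and needs all its neighbors available, whereas boundary nodes at distance $L+1$ must never be computed locally but always overwritten. Verifying that the per-layer replacement schedule keeps these two regimes consistent --- so that every locally-computed node always has its full, correct neighborhood inside the ego net --- is the delicate point on which correctness hinges.
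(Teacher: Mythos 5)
Your proof is correct, and its skeleton is the same as the paper's: induction over the $L$ MPNN layers with a per-node case split on whether the pivot hop exceeds the current layer index, driven by the locality of a single MPNN layer. The genuine difference is the decomposition. You first isolate, as a standalone lemma with its own induction, the claim that in the \emph{conventional} subgraph GNN any node of $G_j$ at pivot distance greater than $i$ has the same layer-$i$ embedding as a plain MPNN run on $G$; only then do you compare \framework{} to the conventional model, invoking this lemma whenever the replacement step fires. The paper instead merges everything into one induction and, at the point where your lemma is needed, simply asserts that nodes in $V_{j}^{>l}$ retain original-graph embeddings after $l$ layers. Your decomposition buys rigor there (that assertion is precisely the statement that requires an induction, and you supply it), and it also pays off at the pooling step: the theorem's concluding claim about $Y$ needs the embeddings of subgraph nodes \emph{outside} the $(L+1)$-hop ego nets, which \framework{} never materializes, and your lemma identifies them with $\mathbf{H}_o^{(L)}$ so that the full multiset fed to $\mathpool$ is reconstructed --- a point the paper's proof passes over silently. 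Your closing observation about why $(L+1)$-hop rather than $L$-hop ego nets are essential (nodes at pivot hop exactly $L$ are still convolved at the last layer and need their complete neighborhoods, while boundary nodes at hop $L+1$ are always overwritten) is likewise correct and is left implicit in the paper.
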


The proof can be found in \appref{subsec:append-thm1}.
As a corollary of \thmref{thm:correctness}, {\framework} can guarantee the same expressiveness as the conventional subgraph GNNs because of their identity.
For example, the conventional subgraph GNNs have expressivity between 1-WL test and 3-WL test~\cite{iclr22esan,nips22sun}, so does {\framework}.
% the proposed {\framework} also guarantees the expressivity at least as powerful as the 1-WL test and no powerful than the 3-WL test.
 % , 

\subsubsection{Extension to Subgraph GNNs with Subgraph Message Passings}
\label{subsec:enfa-ext}

Previous studies~\cite{iclr21star,iclr22esan} suggest adding message passings across subgraphs after each MPNN layer.
This concept is called DSS-GNN in \cite{iclr22esan} and context encodings in \cite{nips22sun}.
An additional subgraph message passing layer operates on the assembled internal embeddings of the subgraphs after each MPNN layer.
Formally,
\begin{equation}
\label{equ:sgnn-ce}
\mathbf{H}_j^{(i)}=\mathbf{H}_j^{(i)}+\mathcenc_i(\sum\nolimits_{1\leq k\leq q} \mathbf{H}_{k}^{(i)})(1\leq j\leq q)
\end{equation}
is added after the second line of \equref{eq:subgraphGNN}.
Here, $\mathcenc$ can either be an identity function~\cite{nips22sun} or another MPNN layer on the original graph $G$~\cite{iclr22esan}.

% The information-sharing component identifies the duplicated nodes of a node across all subgraphs and produces the \emph{unified embeddings}.
% The unified embeddings are then input to 

As for {\framework}, we notice that we can maintain the imaginary convolution results on the original graph, and the nodes in the subgraphs still share the same internal embeddings as in the original graph, the identification of which still depends on the pivot hops.
% As for {\framework}, we notice that the inputs of all subgraphs to each GNN layer share uniform values since they are derived from the information-sharing component in the preceding layer.
% This is analogous to the comparison between the first GNN layer in {\framework} (\equref{eq:enfa-conv}) and the subgraph GNN (\equref{eq:subgraphGNN}), where the input embeddings (\ie the initial features) among all subgraphs are identical.
% Consequently, {\framework} can be extended to include an information-sharing component by consistently behaving as the first layer in \equref{eq:enfa-conv}.
Consequently, {\framework} can accommodate the modification by replacing \equref{eq:enfa-conv} with the following.
% line  as the initial features and distinguish the 
% Thus, after the convolution of a GNN layer of subgraph GNNs, the nodes whose hops to pivot nodes larger than $2$ will share the same embeddings with the interval embeddings of the origin graph.
% \begin{align}
%     &\left.
%     \begin{aligned}
%         &H_o^{(i)}=\mathlayer_i(E, H^{(i-1)})\\
%         &H^{(i)}=\mathlayer_i(\overline{E}, H^{(i-1)})  \\
%         &H^{(i)}[V_{j}^{>1}] = H_o^{(i)}[V_{j}^{>1}] \\
%         &H^{(i)}=\mathshare(H^{(i)})
%     \end{aligned}
%     \right\} \text{ for } 1\leq i\leq L \label{eq:enfa-conv-share}
% \end{align}
\begin{align}
    &\left.
    \begin{aligned}
    &\mathbf{H}_o^{(i)}=\mathlayer_i(E, \mathbf{H}_o^{(i-1)}) \\
    &\mathbf{H}^{(i)}=\mathlayer_i(\overline{E}, \mathbf{H}^{(i-1)})  \\
        &\mathbf{H}_j^{(i)}[V_{j}^{>i}] = \mathbf{H}_o^{(i)}[V_{j}^{>i}] (1\leq j \leq q) \\
        &\mathbf{H}_{SM}^{(i)}=\mathcenc_i(\sum\nolimits_{1\leq k\leq q}\mathbf{H}_{k}^{(i)}) \\
&\mathbf{H}_o^{(i)}=\mathbf{H}_o^{(i)}+\mathbf{H}_{SM}^{(i)} \\
&\mathbf{H}_j^{(i)}=\mathbf{H}_j^{(i)}+\mathbf{H}_{SM}^{(i)}(1\leq j\leq q)
    \end{aligned}
    \right\} 1\leq i\leq L \label{eq:enfa-conv-share}
\end{align}

\noindent
\equref{eq:enfa-conv-share} deviates from \equref{eq:enfa-conv} in two aspects: (1) additional embeddings $\mathbf{H}_{SM}^{(i)}$ by subgraph massage passings are calculated via $\mathcenc$, aligned with the subgraph message passing layer in subgraph GNNs; 
(2) Both the embeddings of the original graph, $\mathbf{H}_o^{(i)}$, and the subgraph, $\mathbf{H}^{(i)}$, add $\mathbf{H}_{SM}^{(i)}$.

\eat{Similarly, }\equref{eq:enfa-init}, \equref{eq:enfa-conv-share} and \equref{eq:enfa-pool} together guarantee the same behavior as subgraph GNNs with subgraph message passings.
\begin{theorem}
    \label{thm:extension}
    The internal embeddings of nodes in $j$-th ego net in \equref{eq:enfa-conv-share} are identical to those in $j$-th subgraph in subgraph GNNs with subgraph message passing layers after $L$ layers.
    Consequently, a combination of \equref{eq:enfa-init}, \equref{eq:enfa-conv-share} and \equref{eq:enfa-pool} can produce the same embeddings $Y$ for $G$ as subgraph GNNs with subgraph message passing layers.
\end{theorem}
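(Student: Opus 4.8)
The plan is to prove the statement by induction on the layer index $i$, carrying along a two-part invariant that strengthens \thmref{thm:correctness} to account for the subgraph message passing term. The invariant I would maintain after the $i$-th layer is: (a) for every subgraph $j$ and every node $v$ lying in the $j$-th ego net, the \framework{} embedding $\mathbf{H}_j^{(i)}[v]$ equals the embedding of $v$ in the $j$-th subgraph of the conventional model with subgraph message passing; and (b) the maintained original-graph embedding $\mathbf{H}_o^{(i)}[v]$ equals the conventional embedding of $v$ in every subgraph in which $v$ is a \emph{far} node at layer $i$, i.e.\ with pivot hop exceeding $i$ (that these conventional embeddings all coincide is itself part of what must be shown). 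The base case $i=0$ is immediate, since the features on the ego nets agree with the initial features and every non-pivot node starts equal to its original-graph value.

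For the inductive step I would first treat the pure message-passing part, consisting of the first three lines of \equref{eq:enfa-conv-share}. Here the argument is essentially that of \thmref{thm:correctness}: a near node (pivot hop $\le i$) has all its one-hop neighbors contained in the $(L{+}1)$-hop ego net, so by invariant (a) at layer $i-1$ its updated embedding is computed exactly as in the conventional model; a far node has all neighbors with pivot hop exceeding $i-1$, so by invariant (b) it and its neighbors carry the original-graph embedding and its update equals $\mathbf{H}_o^{(i)}$, which is precisely the replacement performed on $V_j^{>i}$. Thus, before the subgraph message passing is applied, both parts of the invariant hold for the convolved embeddings.

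The main obstacle, and the genuinely new ingredient beyond \thmref{thm:correctness}, is showing that the subgraph message passing term $\mathbf{H}_{SM}^{(i)}$ produced by \framework{} equals the conventional one, even though \framework{} never materializes the full subgraphs that appear in the global sum $\sum_{k}\mathbf{H}_k^{(i)}$ of \equref{equ:sgnn-ce}. I would handle this by decomposing the sum node-wise: for a fixed node $v$, split the subgraphs into those in which $v$ is near and those in which $v$ is far at layer $i$. The near contributions are exactly the materialized ego-net embeddings, which match the conventional ones by part (a) just established; each far contribution equals $\mathbf{H}_o^{(i)}[v]$ by part (b), so their total is this common value times its multiplicity. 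Summing the two groups reproduces the conventional $\sum_{k}\mathbf{H}_k^{(i)}[v]$ exactly, hence $\mathcenc_i$ returns the same $\mathbf{H}_{SM}^{(i)}$. The care needed here is bookkeeping the multiplicity of far nodes and checking that the split coincides with the one implicit in the conventional sum.

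Finally, I would close the induction by observing that the last two lines of \equref{eq:enfa-conv-share} add the \emph{same} vector $\mathbf{H}_{SM}^{(i)}$ to both $\mathbf{H}_o^{(i)}$ and every $\mathbf{H}_j^{(i)}$. Because the conventional update in \equref{equ:sgnn-ce} also adds this identical term uniformly across all subgraphs, a uniform shift preserves both the subgraph-level equality in (a) and the far-node identity in (b); in particular the transition of a node from far to near as $i$ grows is unaffected, since the shift cancels when comparing a subgraph embedding against the original-graph embedding. This restores the invariant at layer $i$, and iterating to $i=L$ yields the theorem, with equality of the pooled output $Y$ following as in \thmref{thm:correctness}.
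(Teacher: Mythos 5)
Your proposal is correct and follows essentially the same route as the paper's proof: an induction over layers carrying exactly your two-part invariant (the paper's \lemref{lem:ext-correctness} assumes both the subgraph-level equality and $\mathbf{H}_o^{(l-1)}[V_j^{>l-1}]=\mathbf{H}^{(l-1)}[V_j^{>l-1}]$, which given part (a) is interchangeable with your formulation of (b)), with the MPNN step argued as in \thmref{thm:correctness} and the message-passing step handled by the uniform-addition observation that is the paper's \propref{prop:add-ce}. Your explicit node-wise near/far split of $\sum_{k} \mathbf{H}_{k}^{(i)}$ with multiplicity bookkeeping only spells out what the paper leaves implicit, namely that equality of all entries before the subgraph message passing forces both models to compute the same $\mathbf{H}_{SM}^{(i)}$.
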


The proof of \thmref{thm:extension} can be found in \appref{subsec:append-thm2}. 
\thmref{thm:extension} indicates that {\framework} can be applied to subgraph message passing layers and guarantee identical outputs.

\section{Related Works}
\label{sec:related}

\stitle{Expressive Graph Neural Networks.}
Many works have been proposed to improve the expressiveness of GNNs.
Recent efforts propose to simulate the higher-order WL test by passing messages among node tuples to
achieve more expressive GNNs~\cite{nips19equivalence,nips19provably,morris2019weisfeiler}.
% These models, although inherently comparable with WL test, suffer from the inefficiency issue because of the massive enumeration of node tuples.
Other works achieve expressive GNNs by combining naive MPNNs with certain graph information, like pattern counts~\cite{pami22counting}\eat{nips21parameters} and shortest paths~\cite{nips21graphormer,iclr23rethinking}.\eat{, and random walks~\cite{iclr22pfgnn}.}

In comparison, \emph{subgraph GNNs} have been proposed and demonstrated to be more expressive than the $1$ WL test~\cite{arxiv23universal} and effective in practice~\cite{icml2023subgraphormer}.
Pioneers in this field, \cite{aaai21idgnn} and \cite{nips21nested} propose ID-GNN and NGNN, respectively.
These studies can be regarded as the subgraph GNNs with the node-marking and ego-net policies.
Subsequently, ESAN~\cite{iclr22esan} introduces two additional subgraph generation policies, namely the edge-deleting and node-deleting policies.
These authors also propose a subgraph message passing layer to facilitate mutualization between different subgraphs.
OSAN~\cite{nips22osan} extends the subgraph GNNs over ordered subgraphs (where the nodes of subgraphs are indexed and permutated)\eat{ of each graph}.
Given the large number of ordered subgraphs, the authors propose a data-driven sampling strategy for efficient model training.
% In line with ~\cite{iclr22esan}, Frasca \etal~\cite{nips22sun} bridge subgraph GNNs and another popular model for graph classification, IGN, proposing a general GNN layer for subgraph GNNs.
Some other works can be considered as subgraph GNNs based on different types of subgraphs, such as the subgraphs from graph construction~\cite{nips21reconstruction}, and the ego nets of different hops~\cite{icml2019mixhop,nips22powerful}.
Besides, the concept of subgraph GNNs has inspired research on other graph prediction tasks, like node classification~\cite{icassp21egognn,iclr21star} and link prediction~\cite{arxiv22surel,arxiv23surel+}.
\eat{, and substructure counting~\cite{arxiv23count}.}

\definecolor{bblue}{HTML}{7BD3EA}
\definecolor{bgreen}{HTML}{A1EEBD}
\definecolor{byellow}{HTML}{F6F7C4}
\definecolor{bpink}{HTML}{F6D6D6}
\newcommand{\storageFigOffset}{1.4}
\begin{figure*}[t!]
  \centering
  \begin{center}
  \begin{tikzpicture}
  \filldraw[thick,color=black,fill=bpink](-7.2+\storageFigOffset,-0.2) rectangle(-6.4+\storageFigOffset,0.3);
  \filldraw[thick,color=black,fill=byellow](-4.4+\storageFigOffset,-0.2) rectangle(-3.6+\storageFigOffset,0.3);
  \filldraw[thick,color=black,fill=bgreen](-1.6+\storageFigOffset,-0.2) rectangle(-0.8+\storageFigOffset,0.3);
  \filldraw[thick,color=black,fill=bblue](1.2+\storageFigOffset,-0.2) rectangle(2.0+\storageFigOffset,0.3);
  \filldraw[thick,color=black,fill=blue](4.0+\storageFigOffset,-0.2) rectangle(4.8+\storageFigOffset,0.3);
  \node[text width=0.35cm] at (-7.55+\storageFigOffset,0) {};
  \node[text width=2.0cm] at (-5.3+\storageFigOffset,0) {\scriptsize\textbf{\framework-2Layer}};
  \node[text width=2cm] at (-2.5+\storageFigOffset,0) {\scriptsize\textbf{\framework-3Layer}};
  \node[text width=2cm] at (0.3+\storageFigOffset,0) {\scriptsize\textbf{\framework-4Layer}};
  \node[text width=2cm] at (3.1+\storageFigOffset,0) {\scriptsize\textbf{\framework-5Layer}};
  \node[text width=2cm] at (6.1+\storageFigOffset,0) {\scriptsize\textbf{Baseline}};
  % \filldraw[color={rgb:red,246;green,214;yellow,214},fill={rgb:red,246;green,214;yellow,214}](-1,0) rectangle(3,2)
  \end{tikzpicture}
  \end{center}
  % \subfloat[MUTAG]{\includegraphics[width=0.3\textwidth]{}}
  % \hfill
  \vspace{-8pt}
  \subfloat[PROTEINS]{\includegraphics[width=0.3\textwidth]{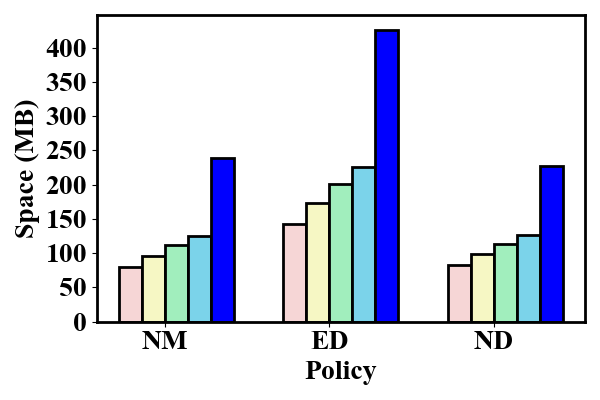}}
  \vspace{-2pt}
  \hfill
  \subfloat[Ogbg-molhiv]{\includegraphics[width=0.3\textwidth]{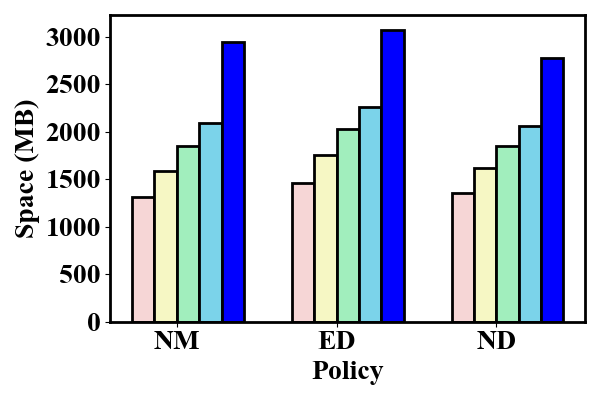}}
  \vspace{-2pt}
  % \newline
  % \subfloat[NCI1]{\includegraphics[width=0.3\textwidth]{}}
  % \hfill
  % \subfloat[NCI109]{\includegraphics[width=0.3\textwidth]{}}
  \hfill
  \subfloat[Ogbg-moltox21]{\includegraphics[width=0.3\textwidth]{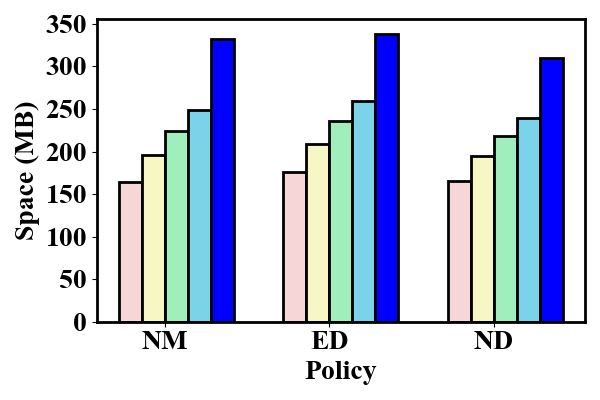}}
  \vspace{-2pt}
  \newline
  \subfloat[EXP]{\includegraphics[width=0.3\textwidth]{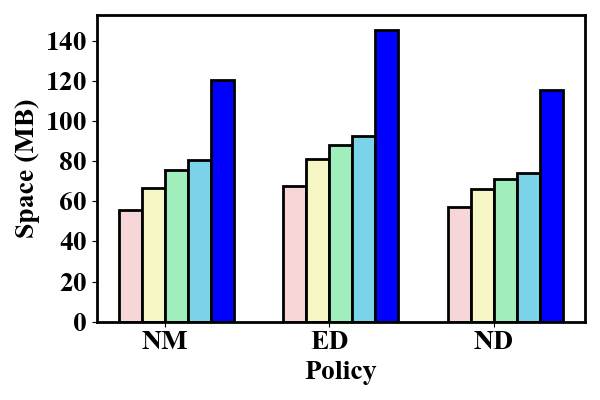}}
  \hfill
  \subfloat[CEXP]{\includegraphics[width=0.3\textwidth]{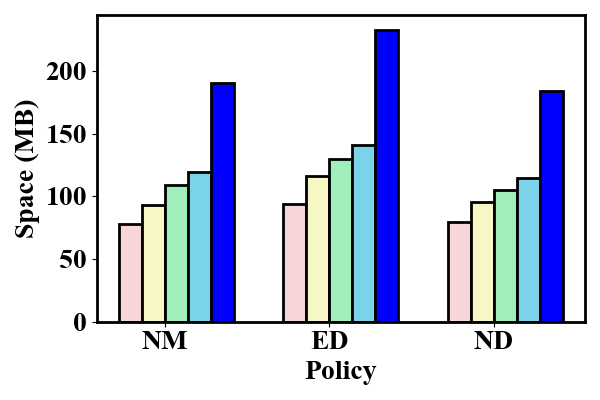}}
  \hfill
  \subfloat[Peptides-func]{\includegraphics[width=0.3\textwidth]{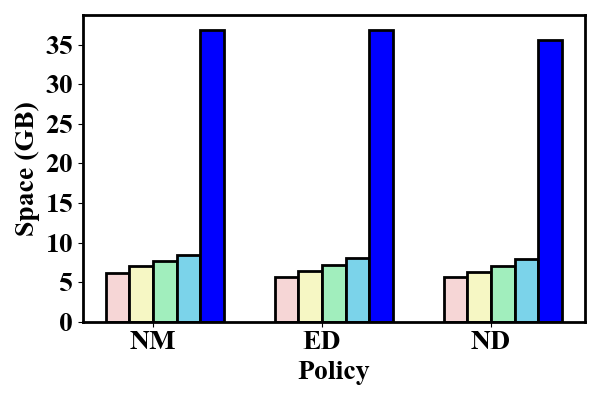}}
  \caption{Experimental Results on Storage Efficiency.}
  \vspace{-4pt}
  \label{fig:exp-eff-stg}
\end{figure*}

\stitle{More Efficient Subgraph GNNs.}
On the other hand, only a few studies have partially focused on the efficiency of subgraph GNNs and they mainly employ sampling strategies in subgraph GNNs.
Bevilacqua \etal~\cite{iclr22esan} propose to randomly select a unified ratio of subgraphs for each graph.
As the number of ordered subgraphs in \cite{nips22osan} is too large, the authors therein propose a data-driven sampling strategy for efficiently training the model.
SUGAR~\cite{www21sugar} suggests employing a reinforcement pooling method to adaptively choose $k$ subgraphs across different epochs.
Kong \etal~\cite{nips23mag} further demonstrate that the reinforcement learning-based sampling strategy could enhance the expressive ability of subgraph GNNs.
Unlike these studies where the sampling strategies often impact performance, our study aims to enhance the efficiency of subgraph GNNs through \textit{exact model design}.
We propose an exact acceleration of subgraph GNNs with full subgraphs and guarantee identical outputs to the subgraph GNNs.
% We propose a unified and efficient framework for subgraph GNNs, which is equivalent to the conventional subgraph GNNs, and transforms any subgraph generation policy into an ego-net generation policy.
{\framework} is orthogonal to the sampling strategies and suitable for the sampling strategies.

% \vspace{-4pt}
\section{Evaluation}
\label{sec:exp}
\eat{This section presents the evaluation of {\framework}.}
In this section, we aim to addressing the following questions to validate the efficiency of {\framework}.
\textbf{Q1}: How much storage can {\framework} save compared to conventional subgraph GNNs?
\textbf{Q2}: How does {\framework} affect the proportional storage of the data?
\textbf{Q3}: How does the computational efficiency of {\framework} compare to conventional subgraph GNNs?
\textbf{Q4}: How does the computational efficiency of {\framework} compare to subgraph GNNs with subgraph message passings? 
% How {\framework} affects the propotional storage of the data?

\subsection{Experimental Setup}
\label{subsec:exp-setup}
\stitle{Datasets.}
We evaluate the performance of {\framework} on the following datasets: 
(1) A bioinformatics dataset from TUD repository~\cite{arxiv20tudata}, namely PROTEINS.
Each graph represents a molecule and the task aims to predict certain properties of the given molecules. 
(2) Bioinformatics datasets from OGB~\cite{nips20ogb}, namely ogbg-molhiv and ogbg-moltox21.
(3) Synthetic datasets designed to measure the expressiveness of GNNs~\cite{arxiv20randomnode}, namely EXP and CEXP.
(4) The long range graph dataset, namely Peptides-func, from LRGB~\cite{nips22lrgb}.
All datasets have been processed following previous work~\cite{iclr22esan,nips22sun}.

\begin{figure*}[t!]
  \centering
  \hspace{25pt}
  \subfloat{\includegraphics[width=0.7\textwidth]{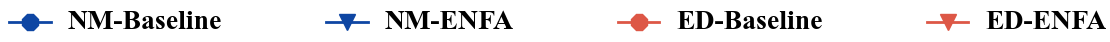}}
  \vspace{-8pt}
  \newline
  \setcounter{subfigure}{0}
  \subfloat[PROTEINS]{\includegraphics[width=0.22\textwidth]{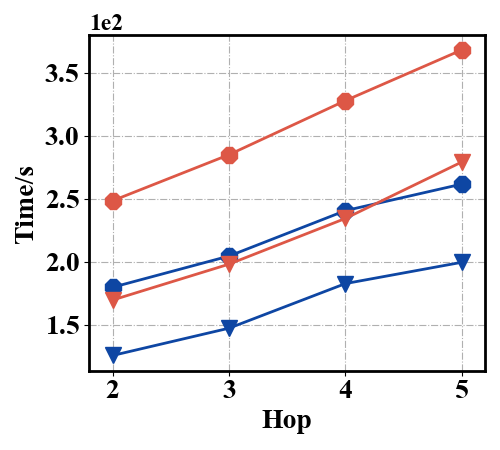}\label{fig:exp-comp-gin-wois-protein}}
  \hfill
  \subfloat[ogbg-molhiv]{\includegraphics[width=0.22\textwidth]{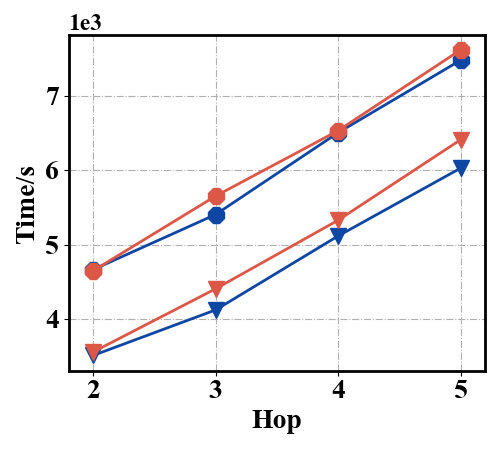}\label{fig:exp-comp-gin-wois-ogbg-molhiv}}
   \hfill
    \subfloat[CEXP]{\includegraphics[width=0.22\textwidth]{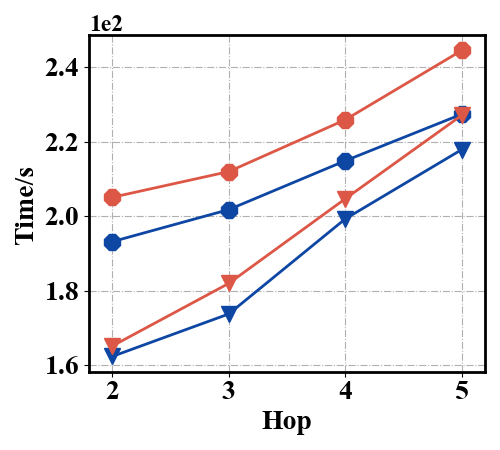}\label{fig:exp-comp-gin-wois-exp}}
  \hfill
  \subfloat[Pep-func]{\includegraphics[width=0.22\textwidth]{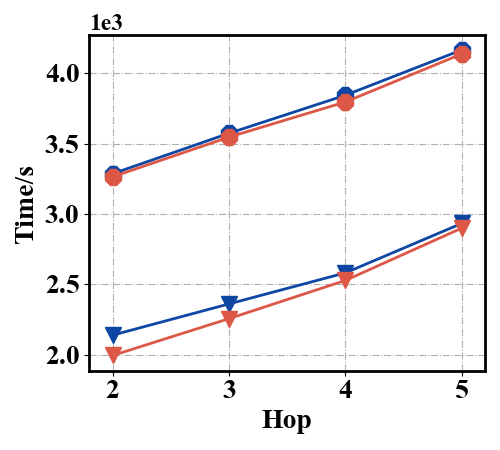}\label{fig:exp-comp-gin-wois-pepfunc}}
  % \hfill
  \newline
  \subfloat[PROTEINS with SM]{\includegraphics[width=0.22\textwidth]{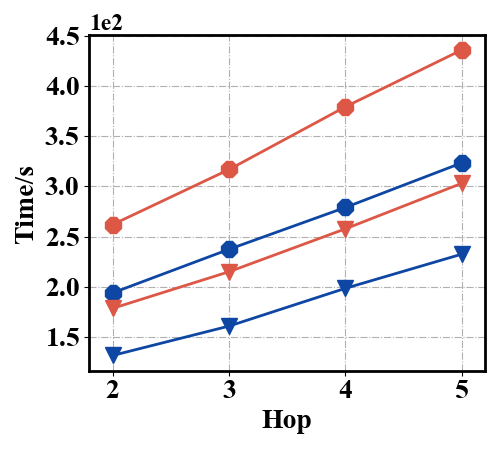}\label{fig:exp-comp-gin-wis-protein}}
  \hfill
  \subfloat[ogbg-molhiv with SM]{\includegraphics[width=0.22\textwidth]{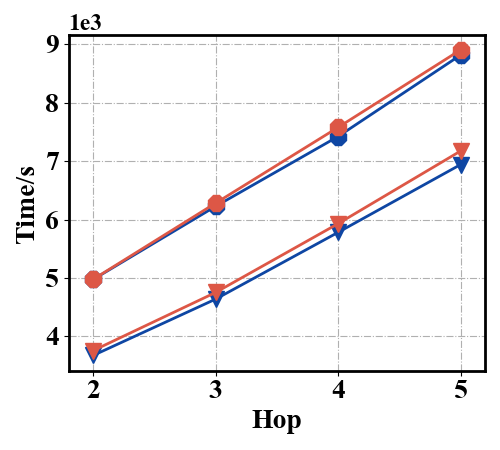}\label{fig:exp-comp-gin-wis-ogbg-molhiv}}
  \hfill
  \subfloat[CEXP with SM]{\includegraphics[width=0.22\textwidth]{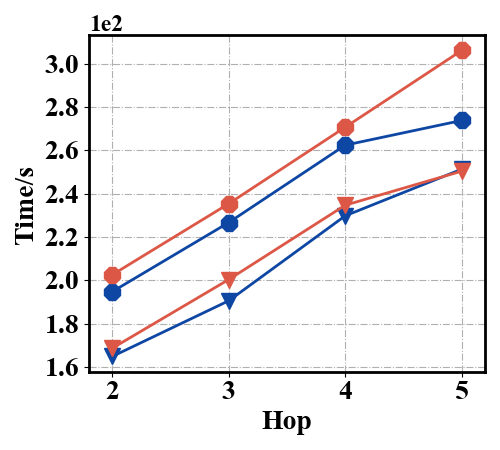}\label{fig:exp-comp-gin-wis-exp}}
  \hfill
  \subfloat[Pep-func with SM]{\includegraphics[width=0.22\textwidth]{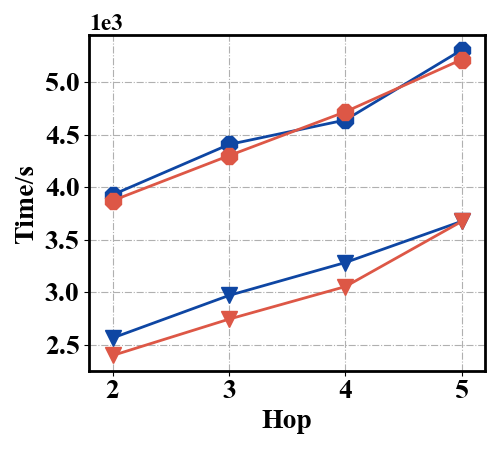}\label{fig:exp-comp-gin-wis-pepfunc}}
  \caption{Experimental Results on Computation Efficiency.}
  \vspace{-10pt}
  \label{fig:exp-computation-gin}
\end{figure*}

\stitle{Benchmark.}
To address \textbf{Q1}, we compare {\framework} with a widely accepted implementation of the conventional subgraph GNN, \ie ESAN~\cite{iclr22esan}.
To thoroughly validate the efficiency of {\framework}, we choose GIN~\cite{iclr18powerful} as MPNN layers, vary the number of layers (ranging from $2$ to $5$), and implement different policies including node-marking (NM), node-deleting (ND), and edge-deleting (ED) for storage efficiency evaluation.
To address \textbf{Q2}, we count the proportion of different types of data in \textbf{Q1} and show the storage breakdown of {\framework} and subgraph GNNs.
To address \textbf{Q3}, we compare {\framework} and ESAN by varying the number of layers and policies as for \textbf{Q1}.
As the computation efficiency of {\framework} mainly stems from the lower memory cost per graph, we select a larger minibatch for {\framework}, ensuring that the size of the input in {\framework} does not exceed that of ESAN, and report the total training time.
To address \textbf{Q4}, we compare {\framework} and the subgraph GNNs with subgraph message passings in \cite{iclr22esan} by varying the number of layers and policies as for \textbf{Q1}.

% \stitle{Implementation.}
% ENFA is developed using PyTorch.
% All experiments are conducted on a machine with an Intel(R) Xeon(R) Platinum 8369B 2.90GHz CPU, 728GB memory, and 4 Nvidia A10 GPUs.
% Each experiment is repeated $5$ times and the average is reported.

% \begin{figure}[t!]
%   \centering
%   \hspace{15pt}\subfloat{\includegraphics[width=0.25\textwidth]{}}
%   \vspace{-7pt}
%   \newline
%   % \subfloat[MUTAG]{\includegraphics[width=0.3\textwidth]{figures/EXP/EFF-Pivot/EFF-pvthop-MUTAG.png}}
%   % \hfill
%   \subfloat[PROTEINS]{\includegraphics[width=0.23\textwidth]{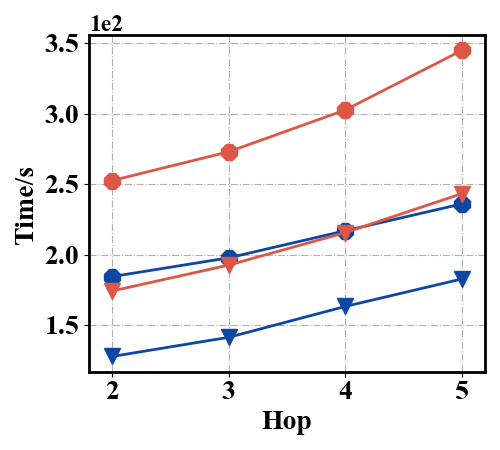}}
%   \hfill
%   \subfloat[CEXP]{\includegraphics[width=0.23\textwidth]{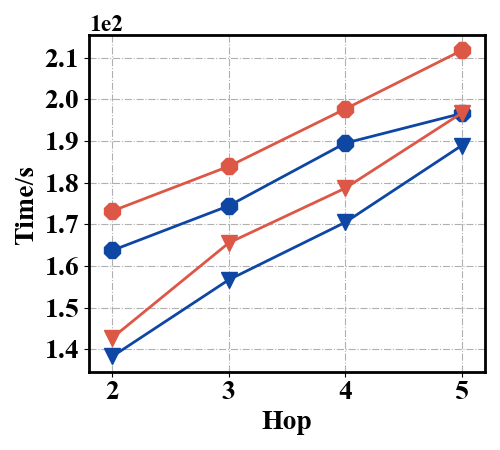}}
%   \caption{Experimental results on computation efficiency with GraphConv.}
%   \label{fig:exp-comp-gc}
% \end{figure}

\subsection{Experimental Results}
We have compared the performance between the baseline and {\framework} in \appref{subsec:append-equivalence}, showing {\framework} can produce very similar performance to subgraph GNNs.
On this basis, we evaluate the storage and computational efficiency of {\framework}.
\subsubsection{Evaluation on Storage Efficiency}

\begin{figure}[t]
\vspace{-2pt}
  \centering
  % \subfloat[MUTAG]{\includegraphics[width=0.3\textwidth]{figures/EXP/EFF-Pivot/EFF-pvthop-MUTAG.png}}
  % \hfill
  \subfloat[Node-marking]{\includegraphics[width=0.19\textwidth]{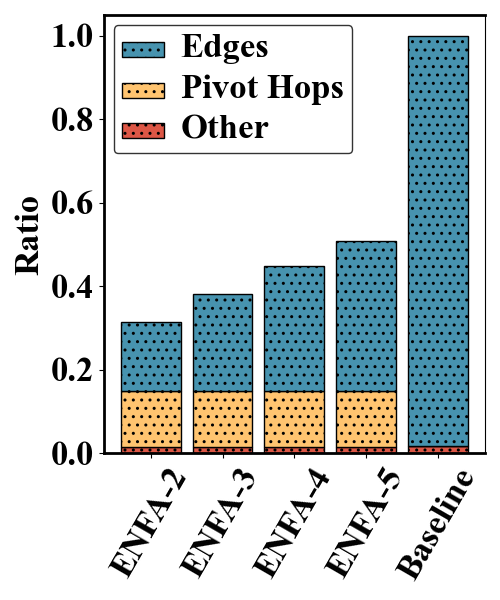}}
  % \hfill
  \hspace{4pt}
  \subfloat[Edge-deleting]{\includegraphics[width=0.19\textwidth]{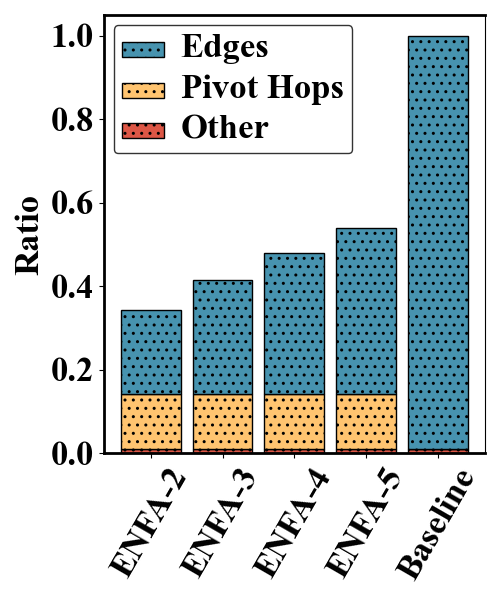}}
  \hfill
  % \subfloat[EXP-ED]{\includegraphics[width=0.23\textwidth]{figures/EXP/EFF-Storage/Eff-Storage-case-PROTEINS-ed.png}}
  % \hfill
  % \subfloat[EXP-ED]{\includegraphics[width=0.23\textwidth]{figures/EXP/EFF-Storage/Eff-Storage-case-PROTEINS-ed.png}}
  % \hfill
  \caption{PROTEINS Storage Breakdown.}
  \vspace{-8pt}
  \label{fig:exp-stg-case}
\end{figure}

Fig.~\ref{fig:exp-eff-stg} shows the experimental results on the storage efficiency of {\framework}.
The storage space of the data file after preprocessing for different policies and varying numbers of layers is reported.
Specifically, the first $4$ bars in each group represent the storage size in {\framework} for $2,3,4,5$ MPNN layers respectively, and the final bar represents the storage size in the baseline.
Overall, {\framework} outperforms the baseline in all cases and saves up to $84.5$\%, $82.6$\%, $80.4$\%, and $78.3$\%\eat{occupies up to $6.5\times$, $5.7\times$, $5.1\times$, and $4.6\times$ less} space compared to subgraph GNNs with $2,3,4$ and $5$ layers,\eat{can save up to 84.5\%, 82.6\%, 80.4\%, and 78.3\% space for subgraph GNNs with $2,3,4$ and $5$ layers} 
respectively.
\eat{
Besides, we observe a larger gap between {\framework} and the baseline in datasets with larger graphs (\eg PROTEINS and peptides-func).
This is because larger graphs yield more subgraphs in subgraph GNNs, and thus {\framework} can save more space compared with the baseline.}
Regarding the results of {\framework} with different parameters, we note that as the number of layers increases, so does the storage space,  as ego nets become larger with increasing layers.
% Lastly, for the data size among different policies, ED consumes more space than ND and NM due to the larger number of subgraphs in ED compared to ND and NM.

\subsubsection{Evaluation on Storage Breakdown}
We take PROTEINS to examine the breakdown storage of ENFA.
The proportion of different data types in ENFA and the baseline, under NM and ED policies, is illustrated in \figref{fig:exp-stg-case}. 
The blue, yellow, and red portions represent the edges of subgraphs, additional data incurred by ENFA ({\ie} the pivot hops), and other data like features of the original graph, respectively.
The additional information incurred by ENFA does not change as the number of layers increases.
This is because the size of pivot hops\eat{ record the hop for each node in every subgraph, which} is not related to the edges of the ego nets.

% With the number of layers increases, the size of the green part in ENFA increases because the ego nets expand with larger hops

% \vspace{-2pt}
\subsubsection{Evaluation on Computation Efficiency}
\figref{fig:exp-comp-gin-wois-protein}-\figref{fig:exp-comp-gin-wois-pepfunc} propose the time cost of {\framework} and the baseline with varying numbers of MPNN layers.
Due to limited space and better readability, we report the results with NM and ED policies on PROTEINS, ogbg-molhiv, EXP, and peptides-func, and the full experimental results can be found in \appref{subsec:append-exp-efficiency}.
{\framework} outperforms the baseline on all datasets and achieves the acceleration by up to $1.66\times$ speedup.
% This may be due to the fewer number of graphs in PTC, resulting in an insignificant performance improvement that fails to offset the additional processing time incurred by {\framework}.
% is less and thus the improved performance is slight and could not exceed the extra processing time by ENFA.
As the number of layers increases, both {\framework} and the baseline require more time for training, but {\framework} still maintains a notable advantage.
Further experiments in \appref{subsec:eval-diff-layers} show that {\framework} has similar improvement on other MPNN layers.
% Moreover, the time cost for the edge-deleting policy is larger than the other policies in most cases.
% This is expected as the number of generated edges for training in ED is usually larger than in ND and NM policies due to more subgraphs.

\figref{fig:exp-comp-gin-wis-protein}-\figref{fig:exp-comp-gin-wis-pepfunc} report the partial results on {\framework} and the baseline with subgraph message passings.
The full results can be found in \appref{subsec:append-exp-efficiency}.
We observe a similar trend as seen in \figref{fig:exp-comp-gin-wois-protein}-\figref{fig:exp-comp-gin-wois-pepfunc}.
This is because to fit subgraph message passings, {\framework} only requires slight modification on the internal embeddings $\mathbf{H}_o^{(i)}$ (\ie \equref{eq:enfa-conv-share}), which is not the bottleneck in training models.
\vspace{-2pt}
\section{Conclusions}
This paper concentrates on the exact acceleration of the popular subgraph GNNs.
% We begin with the observation that during the convolution of subgraph GNNs, a significant number of redundant computations exists, the results of which could be derived from the original graph.
We observe that a significant number of redundant computations exist, the results of which could be derived from the original graph.
Building on this, we propose {\framework}, a model that ensures identical outputs to subgraph GNNs with complete subgraphs, while only requiring the smaller ego nets as inputs, thereby improving both storage and computational efficiency.
Experimental evaluations demonstrate that {\framework} can considerably reduce storage requirements and enhance computational efficiency.

\bibliography{enfa}

\begin{thebibliography}{41}
\providecommand{\natexlab}[1]{#1}

\bibitem[{Abboud et~al.(2020)Abboud, Ceylan, Grohe, and
  Lukasiewicz}]{arxiv20randomnode}
Abboud, R.; Ceylan, I.~I.; Grohe, M.; and Lukasiewicz, T. 2020.
\newblock The surprising power of graph neural networks with random node
  initialization.
\newblock \emph{arXiv preprint arXiv:2010.01179}.

\bibitem[{Abu-El-Haija et~al.(2019)Abu-El-Haija, Perozzi, Kapoor, Alipourfard,
  Lerman, Harutyunyan, Ver~Steeg, and Galstyan}]{icml2019mixhop}
Abu-El-Haija, S.; Perozzi, B.; Kapoor, A.; Alipourfard, N.; Lerman, K.;
  Harutyunyan, H.; Ver~Steeg, G.; and Galstyan, A. 2019.
\newblock Mixhop: Higher-order graph convolutional architectures via sparsified
  neighborhood mixing.
\newblock In \emph{ICML}.

\bibitem[{Bar-Shalom, Bevilacqua, and Maron(2024)}]{icml2023subgraphormer}
Bar-Shalom, G.; Bevilacqua, B.; and Maron, H. 2024.
\newblock Subgraphormer: Unifying Subgraph GNNs and Graph Transformers via
  Graph Products.
\newblock In \emph{ICML}.

\bibitem[{Bevilacqua et~al.(2021)Bevilacqua, Frasca, Lim, Srinivasan, Cai,
  Balamurugan, Bronstein, and Maron}]{iclr22esan}
Bevilacqua, B.; Frasca, F.; Lim, D.; Srinivasan, B.; Cai, C.; Balamurugan, G.;
  Bronstein, M.~M.; and Maron, H. 2021.
\newblock Equivariant subgraph aggregation networks.
\newblock \emph{arXiv preprint arXiv:2110.02910}.

\bibitem[{Bodnar et~al.(2021{\natexlab{a}})Bodnar, Frasca, Otter, Wang, Lio,
  Montufar, and Bronstein}]{nips21weisfeiler}
Bodnar, C.; Frasca, F.; Otter, N.; Wang, Y.; Lio, P.; Montufar, G.~F.; and
  Bronstein, M. 2021{\natexlab{a}}.
\newblock Weisfeiler and lehman go cellular: Cw networks.
\newblock In \emph{NeurIPS}.

\bibitem[{Bodnar et~al.(2021{\natexlab{b}})Bodnar, Frasca, Wang, Otter,
  Montufar, Lio, and Bronstein}]{icml21weisfeiler}
Bodnar, C.; Frasca, F.; Wang, Y.; Otter, N.; Montufar, G.~F.; Lio, P.; and
  Bronstein, M. 2021{\natexlab{b}}.
\newblock Weisfeiler and lehman go topological: Message passing simplicial
  networks.
\newblock In \emph{ICML}.

\bibitem[{Bouritsas et~al.(2022)Bouritsas, Frasca, Zafeiriou, and
  Bronstein}]{pami22counting}
Bouritsas, G.; Frasca, F.; Zafeiriou, S.; and Bronstein, M.~M. 2022.
\newblock Improving graph neural network expressivity via subgraph isomorphism
  counting.
\newblock \emph{IEEE Transactions on Pattern Analysis and Machine
  Intelligence}, 45(1): 657--668.

\bibitem[{Chen et~al.(2019)Chen, Villar, Chen, and Bruna}]{nips19equivalence}
Chen, Z.; Villar, S.; Chen, L.; and Bruna, J. 2019.
\newblock On the equivalence between graph isomorphism testing and function
  approximation with gnns.
\newblock In \emph{{NeurIPS}}.

\bibitem[{Cotta, Morris, and Ribeiro(2021)}]{nips21reconstruction}
Cotta, L.; Morris, C.; and Ribeiro, B. 2021.
\newblock Reconstruction for powerful graph representations.
\newblock In \emph{NeurIPS}.

\bibitem[{Dwivedi et~al.(22)Dwivedi, Ramp{\'a}{\v{s}}ek, Galkin, Parviz, Wolf,
  Luu, and Beaini}]{nips22lrgb}
Dwivedi, V.~P.; Ramp{\'a}{\v{s}}ek, L.; Galkin, M.; Parviz, A.; Wolf, G.; Luu,
  A.~T.; and Beaini, D. 22.
\newblock Long range graph benchmark.
\newblock In \emph{{NeurIPS}}.

\bibitem[{Feng et~al.(2022)Feng, Chen, Li, Sarkar, and Zhang}]{nips22powerful}
Feng, J.; Chen, Y.; Li, F.; Sarkar, A.; and Zhang, M. 2022.
\newblock How powerful are k-hop message passing graph neural networks.
\newblock In \emph{NeurIPS}.

\bibitem[{Frasca et~al.(2022)Frasca, Bevilacqua, Bronstein, and
  Maron}]{nips22sun}
Frasca, F.; Bevilacqua, B.; Bronstein, M.; and Maron, H. 2022.
\newblock Understanding and extending subgraph gnns by rethinking their
  symmetries.
\newblock In \emph{NeurIPS}.

\bibitem[{Hamilton, Ying, and Leskovec(2017)}]{nips17graphsage}
Hamilton, W.; Ying, Z.; and Leskovec, J. 2017.
\newblock Inductive representation learning on large graphs.
\newblock In \emph{NeurIPS}.

\bibitem[{Hu et~al.(2020)Hu, Fey, Zitnik, Dong, Ren, Liu, Catasta, and
  Leskovec}]{nips20ogb}
Hu, W.; Fey, M.; Zitnik, M.; Dong, Y.; Ren, H.; Liu, B.; Catasta, M.; and
  Leskovec, J. 2020.
\newblock Open graph benchmark: Datasets for machine learning on graphs.
\newblock In \emph{NeurIPS}.

\bibitem[{Jia et~al.(2020)Jia, Lin, Ying, You, Leskovec, and
  Aiken}]{kdd20redfree}
Jia, Z.; Lin, S.; Ying, R.; You, J.; Leskovec, J.; and Aiken, A. 2020.
\newblock Redundancy-free computation for graph neural networks.
\newblock In \emph{Proceedings of the 26th ACM SIGKDD International Conference
  on Knowledge Discovery \& Data Mining}, 997--1005.

\bibitem[{Kipf and Welling(2016)}]{arxiv16semi}
Kipf, T.~N.; and Welling, M. 2016.
\newblock Semi-supervised classification with graph convolutional networks.
\newblock \emph{arXiv preprint arXiv:1609.02907}.

\bibitem[{Lecheng et~al.(2023)Lecheng, Jiarui, Hao, Dacheng, Yixin, and
  Muhan}]{nips23mag}
Lecheng, K.; Jiarui, F.; Hao, L.; Dacheng, T.; Yixin, C.; and Muhan, Z. 2023.
\newblock MAG-GNN: Reinforcement Learning Boosted Graph Neural Network.
\newblock In \emph{NeurIPS}.

\bibitem[{Leman and Weisfeiler(1968)}]{leman1968reduction}
Leman, A.; and Weisfeiler, B. 1968.
\newblock A reduction of a graph to a canonical form and an algebra arising
  during this reduction.
\newblock \emph{Nauchno-Technicheskaya Informatsiya}, 2(9): 12--16.

\bibitem[{Maron et~al.(2019)Maron, Ben-Hamu, Serviansky, and
  Lipman}]{nips19provably}
Maron, H.; Ben-Hamu, H.; Serviansky, H.; and Lipman, Y. 2019.
\newblock Provably powerful graph networks.
\newblock In \emph{{NeurIPS}}.

\bibitem[{Morris et~al.(2020)Morris, Kriege, Bause, Kersting, Mutzel, and
  Neumann}]{arxiv20tudata}
Morris, C.; Kriege, N.~M.; Bause, F.; Kersting, K.; Mutzel, P.; and Neumann, M.
  2020.
\newblock Tudataset: A collection of benchmark datasets for learning with
  graphs.
\newblock \emph{arXiv preprint arXiv:2007.08663}.

\bibitem[{Morris et~al.(2019)Morris, Ritzert, Fey, Hamilton, Lenssen, Rattan,
  and Grohe}]{morris2019weisfeiler}
Morris, C.; Ritzert, M.; Fey, M.; Hamilton, W.~L.; Lenssen, J.~E.; Rattan, G.;
  and Grohe, M. 2019.
\newblock Weisfeiler and leman go neural: Higher-order graph neural networks.
\newblock In \emph{AAAI}.

\bibitem[{Qian et~al.(2022)Qian, Rattan, Geerts, Niepert, and
  Morris}]{nips22osan}
Qian, C.; Rattan, G.; Geerts, F.; Niepert, M.; and Morris, C. 2022.
\newblock Ordered subgraph aggregation networks.
\newblock In \emph{NeurIPS}.

\bibitem[{Sandfelder, Vijayan, and Hamilton(2021)}]{icassp21egognn}
Sandfelder, D.; Vijayan, P.; and Hamilton, W.~L. 2021.
\newblock Ego-gnns: Exploiting ego structures in graph neural networks.
\newblock In \emph{ICASSP}.

\bibitem[{Sato(2020)}]{arxiv20powersurvey}
Sato, R. 2020.
\newblock A survey on the expressive power of graph neural networks.
\newblock \emph{arXiv preprint arXiv:2003.04078}.

\bibitem[{Sun et~al.(2021)Sun, Li, Peng, Wu, Ning, Yu, and He}]{www21sugar}
Sun, Q.; Li, J.; Peng, H.; Wu, J.; Ning, Y.; Yu, P.~S.; and He, L. 2021.
\newblock Sugar: Subgraph neural network with reinforcement pooling and
  self-supervised mutual information mechanism.
\newblock In \emph{WWW}.

\bibitem[{Velickovic et~al.(2017)Velickovic, Cucurull, Casanova, Romero, Lio,
  Bengio et~al.}]{stat17gat}
Velickovic, P.; Cucurull, G.; Casanova, A.; Romero, A.; Lio, P.; Bengio, Y.;
  et~al. 2017.
\newblock Graph attention networks.
\newblock \emph{Stat}, 1050(20): 10--48550.

\bibitem[{Wang et~al.(2022)Wang, Cao, Shen, Bingbing, Zhang, and
  Cheng}]{log22sawl}
Wang, Z.; Cao, Q.; Shen, H.; Bingbing, X.; Zhang, M.; and Cheng, X. 2022.
\newblock Towards Efficient and Expressive GNNs for Graph Classification via
  Subgraph-aware Weisfeiler-Lehman.
\newblock In \emph{LoG}.

\bibitem[{Wu et~al.(2020)Wu, Pan, Chen, Long, Zhang, and
  Philip}]{tnnls20survey}
Wu, Z.; Pan, S.; Chen, F.; Long, G.; Zhang, C.; and Philip, S.~Y. 2020.
\newblock A comprehensive survey on graph neural networks.
\newblock \emph{IEEE Transactions on Neural Networks and Learning Systems},
  32(1): 4--24.

\bibitem[{Xu et~al.(2018)Xu, Hu, Leskovec, and Jegelka}]{iclr18powerful}
Xu, K.; Hu, W.; Leskovec, J.; and Jegelka, S. 2018.
\newblock How Powerful are Graph Neural Networks?
\newblock In \emph{ICLR}.

\bibitem[{Yan et~al.(2023)Yan, Zhou, Gao, Tang, and Zhang}]{arxiv23count}
Yan, Z.; Zhou, J.; Gao, L.; Tang, Z.; and Zhang, M. 2023.
\newblock Efficiently Counting Substructures by Subgraph GNNs without Running
  GNN on Subgraphs.
\newblock \emph{arXiv preprint arXiv:2303.10576}.

\bibitem[{Yin et~al.(2023)Yin, Zhang, Wang, and Li}]{arxiv23surel+}
Yin, H.; Zhang, M.; Wang, J.; and Li, P. 2023.
\newblock SUREL+: Moving from Walks to Sets for Scalable Subgraph-based Graph
  Representation Learning.
\newblock \emph{arXiv preprint arXiv:2303.03379}.

\bibitem[{Yin et~al.(2022)Yin, Zhang, Wang, Wang, and Li}]{arxiv22surel}
Yin, H.; Zhang, M.; Wang, Y.; Wang, J.; and Li, P. 2022.
\newblock Algorithm and system co-design for efficient subgraph-based graph
  representation learning.
\newblock \emph{arXiv preprint arXiv:2202.13538}.

\bibitem[{Ying et~al.(2021)Ying, Cai, Luo, Zheng, Ke, He, Shen, and
  Liu}]{nips21graphormer}
Ying, C.; Cai, T.; Luo, S.; Zheng, S.; Ke, G.; He, D.; Shen, Y.; and Liu, T.-Y.
  2021.
\newblock Do transformers really perform badly for graph representation?
\newblock In \emph{{NeurIPS}}.

\bibitem[{You et~al.(2021)You, Gomes-Selman, Ying, and Leskovec}]{aaai21idgnn}
You, J.; Gomes-Selman, J.~M.; Ying, R.; and Leskovec, J. 2021.
\newblock Identity-aware graph neural networks.
\newblock In \emph{AAAI}.

\bibitem[{Zhang et~al.(2023{\natexlab{a}})Zhang, Feng, Du, He, and
  Wang}]{icml23swltest}
Zhang, B.; Feng, G.; Du, Y.; He, D.; and Wang, L. 2023{\natexlab{a}}.
\newblock A Complete Expressiveness Hierarchy for Subgraph GNNs via Subgraph
  Weisfeiler-Lehman Tests.
\newblock In \emph{{ICML}}.

\bibitem[{Zhang et~al.(2023{\natexlab{b}})Zhang, Luo, Wang, and
  He}]{iclr23rethinking}
Zhang, B.; Luo, S.; Wang, L.; and He, D. 2023{\natexlab{b}}.
\newblock Rethinking the expressive power of gnns via graph biconnectivity.
\newblock In \emph{{ICLR}}.

\bibitem[{Zhang et~al.(2018)Zhang, Cui, Neumann, and Chen}]{aaai2018end}
Zhang, M.; Cui, Z.; Neumann, M.; and Chen, Y. 2018.
\newblock An end-to-end deep learning architecture for graph classification.
\newblock In \emph{AAAI}.

\bibitem[{Zhang and Li(2021)}]{nips21nested}
Zhang, M.; and Li, P. 2021.
\newblock Nested graph neural networks.
\newblock In \emph{NeurIPS}.

\bibitem[{Zhao et~al.(2021)Zhao, Jin, Akoglu, and Shah}]{iclr21star}
Zhao, L.; Jin, W.; Akoglu, L.; and Shah, N. 2021.
\newblock From Stars to Subgraphs: Uplifting Any GNN with Local Structure
  Awareness.
\newblock In \emph{ICLR}.

\bibitem[{Zhou, Wang, and Zhang(2023)}]{arxiv23universal}
Zhou, C.; Wang, X.; and Zhang, M. 2023.
\newblock From Relational Pooling to Subgraph GNNs: A Universal Framework for
  More Expressive Graph Neural Networks.
\newblock \emph{arXiv preprint arXiv:2305.04963}.

\bibitem[{Zhou et~al.(2020)Zhou, Cui, Hu, Zhang, Yang, Liu, Wang, Li, and
  Sun}]{aiopen2020survey}
Zhou, J.; Cui, G.; Hu, S.; Zhang, Z.; Yang, C.; Liu, Z.; Wang, L.; Li, C.; and
  Sun, M. 2020.
\newblock Graph neural networks: A review of methods and applications.
\newblock \emph{AI open}, 1: 57--81.

\end{thebibliography}

% \clearpage
% \newpage
% \input{checklist}
\clearpage
\newpage
\appendix
\newtheorem{apptheorem}{Theorem}

\twocolumn[
    \begin{center}
      \section*{Appendix}
      \vspace{6pt}
    \end{center}
]
% Authors may wish to optionally include extra information (complete proofs, additional experiments and plots) in the appendix. All such materials should be part of the supplemental material (submitted separately) and should NOT be included in the main submission.
\section{Proof of Theorems}
\label{subsec:append-proof}
\subsection{Proof of \thmref{thm:correctness}}
\label{subsec:append-thm1}
We first reclaim \thmref{thm:correctness} here.
\begin{apptheorem}
    % The internal embeddings of nodes in $\mathegonet$ in {\framework} after $L$ layers are identical to those in $G_{j}$ in the conventional subgraph GNN with $L$ layers.
    % Consequently, {\framework} can produce the same embeddings $Y$ for $G$ as the conventional subgraph GNN.
    The internal embeddings of nodes in $j$-th ego net in {\framework} are identical to those in $j$-th subgraph in the conventional subgraph GNN after $L$ layers.
    Consequently, ENFA can produce the same embeddings $Y$ for $G$ as the conventional subgraph GNN.
\end{apptheorem}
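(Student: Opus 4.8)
The plan is to prove the statement by an induction on the layer index $i$, preceded by an auxiliary fact about an \emph{imaginary} convolution run on the original graph $G$. First I would make the object $\mathbf{H}_o^{(i)}$ precise: run $G$ through the same layers $\mathlayer_1,\dots,\mathlayer_L$ under the same feature encoding used for the subgraphs, so that non-pivot nodes carry identical input features (for instance an all-zero marking column under the node-marking policy). The first step is then to establish what I would isolate as a lemma: in the \emph{conventional} subgraph GNN, for every subgraph $G_j$ and every node $v$ whose pivot hop $phop_j(v)$ in $G_j$ satisfies $phop_j(v) > i$, the layer-$i$ embedding of $v$ equals $\mathbf{H}_o^{(i)}[v]$.

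I would prove this lemma by induction on $i$, using the locality of an MPNN layer noted in the Motivations section: the output of $v$ depends only on the input embedding of $v$ and of its neighbors $\mathnbr(v)$. For the base case, $phop_j(v) > 0$ means $v \notin \mathpivot(G_j)$, so by the definition of pivot nodes it has identical features and neighborhood in $G_j$ and $G$. For the step, any $v$ with $phop_j(v) > i$ is again non-pivot, hence has the same neighbor set in $G_j$ and $G$, and every neighbor $u$ satisfies $phop_j(u) \ge phop_j(v) - 1 > i - 1$ by the $1$-Lipschitz property of hop distance; the induction hypothesis then makes all inputs to $\mathlayer_i$ coincide, so the outputs coincide.

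The second, main induction proves the theorem's first sentence: after the replacement in \equref{eq:enfa-conv}, for \emph{every} node $v$ in the $(L{+}1)$-hop ego net $\mathegonet_j^{L+1}$, the {\framework} embedding $\mathbf{H}_j^{(i)}[v]$ equals the conventional layer-$i$ embedding. At each layer I would split the ego-net nodes into two groups. Nodes with $phop_j(v) > i$ are exactly $V_j^{>i}$; {\framework} overwrites them with $\mathbf{H}_o^{(i)}[v]$, which is correct by the lemma. Nodes with $phop_j(v) \le i$ keep their ego-net convolution output, and the crucial point is that such a node sits within $L$ hops of a pivot node, so all its neighbors lie within $L{+}1$ hops and are present in the ego net — the truncation removes no neighbor that $v$ aggregates. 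Hence $v$ aggregates over its true neighbor set, and by the induction hypothesis every neighbor (and $v$ itself) already carries the correct layer-$(i{-}1)$ embedding, so $\mathlayer_i$ reproduces the conventional output. For the pooling consequence, I would observe that a node outside the ego net has $phop_j(v) > L$, so by the lemma its conventional layer-$L$ embedding equals $\mathbf{H}_o^{(L)}[v]$, the value {\framework} carries for it; combined with ego-net correctness, every node of every subgraph receives its conventional embedding, so $\mathpool$ returns the same $Y$.

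The main obstacle is the truncation argument in the second induction: one must show that the retained nodes (pivot hop $\le i$) never lose a neighbor to the $(L{+}1)$-hop cut, and that each such neighbor's embedding is already correct — either because it was itself computed, or, when its pivot hop exceeds $i{-}1$, because it was overwritten from the (correct) original-graph convolution. Pinning down why radius $L{+}1$ rather than $L$ is precisely what the $L$-layer argument requires is the technical heart of the proof, and it is also where the two inductions and the ordering of the convolve-then-replace step in \equref{eq:enfa-conv} must be dovetailed carefully.
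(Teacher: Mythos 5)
Your proof is correct and follows essentially the same route as the paper's: an induction on the layer index, splitting each subgraph's nodes by whether their pivot hop exceeds the current layer, justifying replaced nodes via the locality of message passing on the original graph and retained nodes via the fact that the $(L+1)$-hop ego net preserves all neighbors needed through layer $L$. The only differences are ones of rigor, in your favor: the fact you isolate as an auxiliary lemma (that in the conventional subgraph GNN a node with pivot hop greater than $i$ has layer-$i$ embedding equal to the original graph's) is asserted inline in the paper's Lemma~1 as a direct consequence of the definition of pivot hops, and your explicit treatment of nodes outside the ego net when passing to the pooled output $Y$ is left implicit in the paper.
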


We prove Theorem 1 through an induction on the number of layers, $l$.
From the observation in \figref{fig:repeat-conv}, we have the following base case for $l=1$:
\begin{proposition}
    \label{remark:correctness-init} 
    After the convolution of the first MPNN layer, the internal embeddings $\mathbf{H}^{(1)}$ obtained by ENFA (as per \equref{eq:enfa-conv}) are identical to those in the conventional subgraph GNN (as per \equref{eq:subgraphGNN}).
\end{proposition}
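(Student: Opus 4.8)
The plan is to verify the base case directly from the locality of a single MPNN layer, classifying the nodes of each subgraph by their pivot hop. Recall that the layer-$1$ output of any node $v$ is determined solely by $v$'s own input embedding together with the multiset of input embeddings of its neighbors $\mathnbr(v)$; hence if two graphs present $v$ with the same input embedding, the same neighbor set, and the same neighbor embeddings, they must assign $v$ the same layer-$1$ output. First I would fix an arbitrary subgraph $G_j$ and split its nodes according to whether $\mathphops_j(v)>1$ (the set $V_j^{>1}$ that ENFA overwrites in the third line of \equref{eq:enfa-conv}) or $\mathphops_j(v)\leq 1$ (the set ENFA retains from the ego-net convolution), and then show the ENFA computation agrees with the conventional one on each part.

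For a node $v$ with $\mathphops_j(v)>1$, the key point is that neither $v$ nor any neighbor of $v$ is a pivot node: $v$ lies at distance at least $2$ from every pivot, so $v$ and all members of $\mathnbr_j(v)$ lie at distance at least $1$ from every pivot. By the definition of $\mathpivot(G_j)$, a non-pivot node has identical features and identical neighbors in $G_j$ and in $G$; consequently $v$ has the same neighbor set in both graphs, $\mathbf{X}[v]=\mathbf{X}_j[v]$, and $\mathbf{X}[u]=\mathbf{X}_j[u]$ for every neighbor $u$. The locality observation then forces the conventional subgraph value at $v$ to equal the original-graph value $\mathbf{H}_o^{(1)}[v]$, which is exactly the value ENFA assigns via $\mathbf{H}_j^{(1)}[V_j^{>1}]=\mathbf{H}_o^{(1)}[V_j^{>1}]$.

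For a node $v$ with $\mathphops_j(v)\leq 1$, ENFA keeps the value computed by the ego-net convolution $\mathlayer_1(\overline{E},\mathbf{H}^{(0)})$, so I must show this matches the full-subgraph computation. The decisive observation is that every neighbor of $v$ sits at distance at most $2$ from some pivot, and since the ego net has radius $L+1\geq 2$, all of $\mathnbr_j(v)$ are contained in it; because the ego net is the induced subgraph on these nodes, every edge incident to $v$ is retained, so $v$ has exactly the same neighbor set in the ego net as in $G_j$. As the ego net inherits its features from $G_j$, the input embeddings of $v$ and of its neighbors coincide with those in $G_j$, and locality again forces the two layer-$1$ outputs to agree. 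Combining the two cases shows $\mathbf{H}^{(1)}$ in ENFA equals $\mathbf{H}^{(1)}$ in the conventional subgraph GNN on every node of every $G_j$, which is the claim.

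I expect the main obstacle to be the bookkeeping around the ego-net radius in the second case: one must confirm that radius $L+1$ genuinely suffices to restore the \emph{entire} one-hop neighborhood of every retained node, and separately observe that the boundary nodes of the ego net --- whose neighborhoods are truncated and whose ego-net values may therefore be incorrect --- are precisely those with $\mathphops_j(v)>1$, so they are overwritten and never relied upon. Making this ``compute-then-overwrite'' interplay precise, rather than the MPNN locality argument itself, is the delicate part of the proof.
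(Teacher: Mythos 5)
Your proof is correct and takes essentially the same approach as the paper's: both invoke the one-layer locality of MPNNs and split each subgraph's nodes by pivot hop, showing that nodes with pivot hop $\leq 1$ get the correct value from the ego-net convolution (same local structure as in $G_j$) while nodes with pivot hop $> 1$ get the correct value from the overwrite $\mathbf{H}_j^{(1)}[V_j^{>1}]=\mathbf{H}_o^{(1)}[V_j^{>1}]$ (neither they nor their neighbors are pivots, so their neighborhoods and features agree with $G$). The only difference is that you spell out the bookkeeping the paper leaves implicit --- that the radius-$(L+1)$ ego net, with $L\geq 1$, contains the entire one-hop neighborhood of every retained node.
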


\propref{remark:correctness-init} holds based on the property of the MPNN layers:
the output of a node from an MPNN layer only depends on the input embeddings of \emph{the node itself and its neighbors}.
Consequently, (1) for a node whose pivot hop is no greater than $1$ in $\mathegonet^{L+1}_{j}$, its local structure is the same as that in $G_{j}$, and thus ENFA directly obtains the same embeddings as in the conventional subgraph GNNs; (2) for a node whose pivot hop is greater than $1$, itself and its neighbors have identical features in the subgraph of the conventional subgraph GNN and the original graph, which ensures their internal embeddings replaced by the original graph in ENFA equal those in the conventional subgraph GNN.

% \begin{remark}
%     \label{remark:correctness-init}    
%     For a node in a subgraph $G_{k,j}$, its internal embeddings in $H^{(1)}$ (\equref{eq:enfa-conv}) equal the embeddings in vanilla subgraph GNNs (\equref{eq:subgraphGNN}) if its pivot hop in $pvt_{k,j}$ larger than $1$, which means ENFA could produce the same interval embeddings as the vanilla subgraph GNN after the first GNN layer.
%     % After the first GNN layer of ENFA, the pivot nodes as well as their neighbors obtain different interval embeddings with that in the origin graph, while other nodes share the same embeddings.
% \end{remark}

% \remarkref{remark:correctness-init} holds because of the property of the GNN layers:
% the output of a node from a GNN layer only depends on the input embeddings of \emph{the node itself and its neighbors}.
% Thus, For the nodes that have hops to pivot nodes larger than $1$, they will share the same neighbors and features in the subgraph and the origin graph and the first GNN layer produces the same output embeddings.

Now consider a general case where ENFA has achieved the same internal embeddings as the conventional subgraph GNN after the $(l-1)$-th layers.
\begin{lemma}
\label{lem:correctness-ltolp1}
    Assuming ENFA could produce the same embeddings in the subgraph $\mathegonet^{L+1}_{j}$ as those in $G_{j}$ in the conventional subgraph GNN after $l-1$ MPNN layers, % the output embeddings from ENFA of a node after the $l-1$ GNN layers in the subgraph $G_{k,j}$ equal that in the vanilla subgraph GNN if and only if its pivot hop of the node larger than $l-1$, 
    then this equivalence holds after the $l$-th layer for $l\leq L$. % , which means ENFA could produce the same embeddings as the vanilla subgraph GNN after $l$ GNN layers.
\end{lemma}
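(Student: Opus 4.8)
The plan is to prove \lemref{lem:correctness-ltolp1} by a node-wise case analysis based on the pivot hop, mirroring the base-case argument in \propref{remark:correctness-init} but now invoking the inductive hypothesis in place of direct structural equality. Fix a subgraph index $j$ and a node $v$ in the ego net $\mathegonet^{L+1}_{j}$. The goal is to show that the embedding $\mathbf{H}_{j}^{(l)}[v]$ computed by ENFA via \equref{eq:enfa-conv} equals the embedding of $v$ in $G_{j}$ produced by the conventional subgraph GNN via \equref{eq:subgraphGNN} after the $l$-th layer. I would split into the two regimes $\mathphops_{j}(v) > l$ and $\mathphops_{j}(v) \leq l$, exactly paralleling the ``distance to the rooted node matters'' observation from \secref{sec:mot}.

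First I would handle the case $\mathphops_{j}(v) > l$. Here ENFA does not compute $v$'s embedding by convolution but instead copies it from the original graph, since $v \in V_{j}^{>l}$ by definition, giving $\mathbf{H}_{j}^{(l)}[v] = \mathbf{H}_{o}^{(l)}[v]$. The claim then reduces to showing that the conventional subgraph GNN also produces the original-graph embedding $\mathbf{H}_{o}^{(l)}[v]$ for this node in $G_{j}$. The key subclaim is that $v$ and all of its neighbors lie at pivot hop strictly greater than $l-1$, so that by the inductive hypothesis their $(l-1)$-th embeddings in $G_{j}$ already match those in $G$; since $v$'s local neighborhood structure is also identical in $G_{j}$ and $G$ (it is far from any pivot node, whose neighbors/features are the only points of difference), the MPNN update at layer $l$ yields identical outputs in both. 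This is where I would lean on the property that $\mathphops$ is a shortest-path distance to the pivot set, so a node at distance $>l$ from all pivots has every neighbor at distance $>l-1$ from all pivots.

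Next I would treat $\mathphops_{j}(v) \leq l$. Now ENFA computes $v$'s embedding by an actual MPNN convolution over the ego net, $\mathbf{H}_{j}^{(l)}[v] = COM_{l}(AGG_{l}(\{\mathbf{H}_{j}^{(l-1)}[u] : u \in \mathnbr_{j}(v)\}), \mathbf{H}_{j}^{(l-1)}[v])$. By the inductive hypothesis, every input $(l-1)$-th embedding appearing here — that of $v$ and of each neighbor $u \in \mathnbr_{j}(v)$ — agrees with the corresponding embedding in the conventional subgraph GNN on $G_{j}$. Here I must verify that the full neighbor set $\mathnbr_{j}(v)$ is present in the ego net, i.e. that no neighbor of $v$ has been truncated by restricting to the $(L+1)$-hop ball; since $\mathphops_{j}(v)\le l\le L$ places $v$ well inside the ego net, all its neighbors lie within radius $L+1$ of a pivot and are retained. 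Because ENFA and the conventional GNN apply the identical $AGG_{l}, COM_{l}$ to identical inputs over the identical neighbor set, the outputs coincide.

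The main obstacle I anticipate is the boundary bookkeeping in the second case: making rigorous that the $(L+1)$-hop ego net is large enough that for every node $v$ with $\mathphops_{j}(v)\le l\le L$, the entire neighbor set used in its layer-$l$ update (and transitively the $(l-1)$-th embeddings of those neighbors) is faithfully available inside $\mathegonet^{L+1}_{j}$ and matches the full-subgraph computation. This is precisely the place where the choice of $L+1$ hops (rather than $L$) is used, and I would state it as a short separate claim: for $l \le L$, every node at pivot hop $\le l$ has all of its neighbors at pivot hop $\le l+1 \le L+1$, hence inside the ego net with correctly-maintained embeddings. Once both cases and this containment claim are established, the induction closes and the lemma follows.
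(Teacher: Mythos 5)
Your case split by pivot hop, and your handling of the case where the pivot hop of $v$ is at most $l$, coincide with the paper's own proof: the inputs to layer $l$ agree by the inductive hypothesis, and the choice of $(L+1)$-hop ego nets guarantees that the full neighbor set of $v$ (and its incident edges) is present in $\mathegonet_j^{L+1}$, so identical $AGG_l,COM_l$ on identical inputs give identical outputs. The problem is in your other case, pivot hop of $v$ greater than $l$. There you reduce to showing that the conventional GNN's layer-$l$ output at $v$ in $G_j$ equals $\mathbf{H}_o^{(l)}[v]$, and you argue this by noting that $v$ and all its neighbors have pivot hop greater than $l-1$ and then invoking the inductive hypothesis to conclude that their layer-$(l-1)$ embeddings in $G_j$ match those in $G$. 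Two issues. A minor one: the inductive hypothesis only equates ENFA's ego-net embeddings with the conventional GNN's embeddings in $G_j$; to conclude ``conventional matches original'' you must additionally invoke ENFA's replacement step at layer $l-1$ (these nodes lie in $V_j^{>l-1}$, so ENFA's stored value for them is literally $\mathbf{H}_o^{(l-1)}$). That chain is recoverable. The genuine gap: the inductive hypothesis says nothing about nodes outside the $(L+1)$-hop ego net, because ENFA never computes or stores embeddings for them. A node $v$ inside the ego net with pivot hop in $(l, L+1]$ can have neighbors of pivot hop $L+2$, i.e., outside $\mathegonet_j^{L+1}$, and the conventional GNN on $G_j$ aggregates over those neighbors too. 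For them, your claimed match between $G_j$ and $G$ at layer $l-1$ cannot be extracted from the inductive hypothesis. Concretely: on a path $v_0-v_1-v_2-\cdots$ with pivot $\{v_0\}$ and $L=l=2$, the ego net is $\{v_0,\dots,v_3\}$; node $v_3$ has pivot hop $3>l$, but its neighbor $v_4$ has pivot hop $4>L+1$ and is untouched by the induction.

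The repair is exactly what the paper does for this case: abandon the inductive hypothesis there and use MPNN locality directly. Since the pivot hop of $v$ exceeds $l$, the $l$-hop neighborhood of $v$ in $G_j$ contains no pivot node, hence is identical in structure and features to the $l$-hop neighborhood of $v$ in $G$; the output of an $l$-layer MPNN at $v$ depends only on this neighborhood, so the conventional GNN's value at $v$ equals $\mathbf{H}_o^{(l)}[v]$, which is precisely what ENFA assigns by replacement. Alternatively, you can keep your one-step style but must strengthen the induction: maintain the extra invariant that \emph{every} node of $G_j$ (inside or outside the ego net) with pivot hop greater than $i$ has layer-$i$ embedding in the conventional GNN equal to its embedding in $G$. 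This invariant does close under one MPNN step by your neighbor argument (a node at pivot hop $>i$ has all neighbors at pivot hop $>i-1$ and an unmodified local structure), with base case $i=0$ holding because such nodes have unmodified features. Either repair completes the proof; as written, the case of pivot hop greater than $l$ does not go through.
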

\begin{proof}
    By the assumption, the input embeddings to the $l$-th GNN layer remain the same in ENFA and the conventional subgraph GNN.
    Now consider different nodes based on their pivot hops in the subgraph $\mathegonet_{j}$:
    \begin{itemize}[leftmargin=*]
    \item For a node with pivot hops in $\mathegonet_{j}$ no greater than $l$, its internal embeddings before the $l$-th layer are the same in ENFA and conventional subgraph GNN as per the assumption.
    Their local structures in $\mathegonet^{L+1}_{j}$ and $G_{j}$ are also the same because ENFA generates $(L+1)$-hop ego nets and $l\leq L$.
    Hence, ENFA could generate the same embeddings after the $l$-th layer for these nodes.
    % \item For the nodes with pivot hops in $G_{k,j}$ exactly $l$, each of them must contain at least one neighbor that has pivot hop $l-1$ in $G_{k,j}$.
    % By the assumption, the interval embeddings of their neighbors after $l-1$ layers differ in $G_{k,j}$ and $G_k$, and thus these nodes will also obtain different embeddings from the $l$-th layer.
    \item Consider the nodes with pivot hops in $G_{j}$ larger than $l$, {\ie} those nodes in $V_{j}^{>l}$.
    According to the definition of pivot hops, the $l$-hop ego nets of these nodes disjoint with $pvt_{j}$, meaning that the internal embeddings of nodes in $V_{j}^{>l}$ after $l$ layers remain the same in $G_{j}$ and the original graph $G$.
    ENFA replaces the internal embeddings of $V_{j}^{>l}$ with the embeddings in the original graph after $l$ layers exactly, thus obtaining the same internal embeddings for those nodes as the conventional subgraph GNN.
    \end{itemize}
In summary, We prove that given the assumption on the $(l-1)$-th layer, ENFA ensures identical outputs to the conventional subgraph GNN after the $l$-th GNN layer.
\end{proof}

Building upon \propref{remark:correctness-init} and \lemref{lem:correctness-ltolp1}, we are now in a position to demonstrate that {\framework} operates identically to the conventional subgraph GNN.
{\framework} 
obtains the same embeddings after $L$ MPNN layers, {\ie} $\mathbf{H}^{(L)}$ in \equref{eq:enfa-conv}, as subgraph GNNs.
As a result, the same holds for $Y$.

\subsection{Proof of \thmref{thm:extension}}
\label{subsec:append-thm2}
We first reclaim the extension of ENFA to subgraph message passing layers and \thmref{thm:extension} here.

\begin{align}
    &\mathbf{H}_o^{(0)}=\mathbf{X}; \quad \overline{E}, \mathbf{H}^{(0)}=\mathcat(\mathesub_{L+1}(pvt)) \label{eq:enfaext-init} \\
    &\left.
    \begin{aligned}
    &\mathbf{H}_o^{(i)}=\mathlayer_i(E, \mathbf{H}_o^{(i-1)})& \\
    &\mathbf{H}^{(i)}=\mathlayer_i(\overline{E}, \mathbf{H}^{(i-1)})  \\
        &\mathbf{H}_j^{(i)}[V_{j}^{>i}] = \mathbf{H}_o^{(i)}[V_{j}^{>i}] (1\leq j \leq q) \\
        &\mathbf{H}_{SM}^{(i)}=\mathcenc_i(\sum\nolimits_{1\leq k\leq q}\mathbf{H}_{k}^{(i)}) \\
&\mathbf{H}_o^{(i)}=\mathbf{H}_o^{(i)}+\mathbf{H}_{SM}^{(i)} \\
& \mathbf{H}_j^{(i)}=\mathbf{H}_j^{(i)}+\mathbf{H}_{SM}^{(i)}(1\leq j\leq q)
    \end{aligned}
    \right\} 1\leq i\leq L \label{eq:enfaext-share} \\
    &Y=\mathpool(\mathbf{H}^{(L)}) \label{eq:enfaext-pool}
\end{align}

% \begin{apptheorem}
%     \label{thm:extension}
%     The internal embeddings of nodes in $\mathegonet$ in \equref{eq:enfa-conv-share} after $L$ layers are identical to those in $G_{j}$ in DSS-GNN with $L$ layers.
%     Consequently, a combination of \equref{eq:enfa-init}, \equref{eq:enfa-conv-share} and \equref{eq:enfa-pool} can produce the same embeddings $Y$ for $G$ as DSS-GNN.
% \end{apptheorem}
\begin{apptheorem}
    \label{thm:extension}
    The internal embeddings of nodes in $j$-th ego net in \equref{eq:enfa-conv-share} are identical to those in $j$-th subgraph in subgraph GNNs with subgraph message passing layers after $L$ layers.
    Consequently, a combination of \equref{eq:enfa-init}, \equref{eq:enfa-conv-share} and \equref{eq:enfa-pool} can produce the same embeddings $Y$ for $G$ as the subgraph GNNs with subgraph message passing layers.
\end{apptheorem}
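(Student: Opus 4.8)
The plan is to prove \thmref{thm:extension} by the same layer-wise induction used for \thmref{thm:correctness}, but strengthening the induction hypothesis so that it simultaneously tracks two quantities: (i) for every node $v$ in the ego net $\mathegonet_j^{L+1}$, the ENFA embedding $\mathbf{H}_j^{(l)}[v]$ agrees with the embedding of $v$ in the $j$-th subgraph of the conventional subgraph GNN \emph{with} subgraph message passing; and (ii) for every subgraph $j$ and every node $v$ with pivot hop greater than $l$ in $G_j$, that same conventional embedding equals the maintained ``imaginary'' original-graph embedding $\mathbf{H}_o^{(l)}[v]$. Invariant (ii) is the key addition relative to \thmref{thm:correctness}: it certifies that $\mathbf{H}_o$ remains a faithful stand-in for all far-from-pivot (hence out-of-ego-net) positions even after the subgraph message passing term is folded in. The base case $l=1$ follows as in \propref{remark:correctness-init} for the convolution and replacement, after which one checks that the $\mathcenc_1$ step preserves both invariants.

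For the inductive step at layer $l+1\le L$, I would first observe that the three operations inherited from \equref{eq:enfa-conv} — the original-graph convolution $\mathlayer_{l+1}(E,\cdot)$, the ego-net convolution $\mathlayer_{l+1}(\overline{E},\cdot)$, and the replacement $\mathbf{H}_j^{(l+1)}[V_j^{>l+1}]=\mathbf{H}_o^{(l+1)}[V_j^{>l+1}]$ — reproduce, verbatim, the argument of \lemref{lem:correctness-ltolp1}. Because the input embeddings agree by the hypothesis and every node with pivot hop at most $l+1$ has its full one-hop neighborhood inside the $(L+1)$-hop ego net (as $l+1\le L$), the post-convolution embeddings of ego-net nodes match the conventional ones, and the replacement restores the correct value at the far nodes. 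This yields equality of $\mathbf{H}_j^{(l+1)}$ and $\mathbf{H}_o^{(l+1)}$ with their conventional counterparts \emph{before} the subgraph message passing term is added.

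The heart of the proof — and the step I expect to be the main obstacle — is showing that the term $\mathbf{H}_{SM}^{(l+1)}=\mathcenc_{l+1}(\sum_{1\le k\le q}\mathbf{H}_k^{(l+1)})$ computed by ENFA equals the one in the conventional model, whose sum ranges over the full subgraph embeddings. I would split the sum node-wise: for a fixed node $v$, the subgraphs $k$ in whose ego net $v$ lies contribute their explicitly computed (and, by the previous paragraph, correct) embeddings, while every remaining subgraph $k$ contributes $\mathbf{H}_o^{(l+1)}[v]$, which by invariant (ii) is exactly the conventional far-node embedding $\mathbf{H}_k^{(l+1)}[v]$. Hence the two sums coincide and $\mathbf{H}_{SM}^{(l+1)}$ matches. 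The final subtlety is that ENFA then adds $\mathbf{H}_{SM}^{(l+1)}$ to \emph{both} $\mathbf{H}_j^{(l+1)}$ and $\mathbf{H}_o^{(l+1)}$: the first update re-establishes invariant (i) on the ego nets (same pre-SM value plus the same term), and — crucially — the second update re-establishes invariant (ii), since in the conventional model every far node also receives this identical term, so the augmented proxy $\mathbf{H}_o^{(l+1)}[v]+\mathbf{H}_{SM}^{(l+1)}[v]$ continues to equal the conventional far-node embedding. Carrying both invariants to $l=L$ gives identical $\mathbf{H}^{(L)}$, and therefore identical $Y$ after \equref{eq:enfaext-pool}.
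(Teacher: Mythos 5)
Your proposal is correct and takes essentially the same route as the paper: a layer-wise induction whose strengthened hypothesis tracks both the ego-net embeddings and the original-graph proxy (the paper's \lemref{lem:ext-correctness} carries precisely your invariants (i) and (ii), stated there as $\mathbf{H}_o^{(l-1)}[V_j^{>l-1}]=\mathbf{H}^{(l-1)}[V_j^{>l-1}]$), with the pre-SM convolution reduced to the \thmref{thm:correctness} argument and the SM addition handled by the observation that the identical node-wise term is added to subgraphs and original graph alike (\propref{prop:add-ce}). If anything, your node-wise splitting of $\sum_{k}\mathbf{H}_k^{(l)}$ into explicitly computed ego-net entries plus $\mathbf{H}_o$-proxy entries spells out a step the paper compresses into the assertion that both models add ``the same'' $\mathbf{H}_{SM}^{(l)}$.
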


For better readability, in this proof, the term ``subgraph GNN'' denotes those with subgraph message passing layers.
With additional subgraph message passing layers, the internal embeddings of nodes whose pivot hops larger than $i$ cannot be directly obtained from the original graph, as these layers alter the internal embeddings of such nodes ({\ie} \equref{equ:sgnn-ce}).
Nonetheless, we can still retain the embeddings shared among different subgraphs by adding the embeddings from the subgraph message passing layers to the internal embeddings of the \textit{original graph} after each layer.

\begin{proposition}
\label{prop:add-ce}
Assume a node shares identical internal embeddings in both the subgraph and the original graph before the addition of embeddings from the subgraph message passing layer, then this equivalence also holds after adding the embeddings to the internal embeddings of both subgraphs and the original graph.
\end{proposition}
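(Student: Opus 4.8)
The plan is to recognize that \propref{prop:add-ce} isolates a purely algebraic fact about the subgraph message passing sublayer, namely that its correction term is added \emph{uniformly}. First I would fix a layer index $i$ and a node $v$ satisfying the hypothesis, \ie $\mathbf{H}_j^{(i)}[v]=\mathbf{H}_o^{(i)}[v]$ holds for the embeddings produced immediately after the replacement step (the third line of \equref{eq:enfa-conv-share}) but before the additions in the last two lines. The key structural observation is that the correction $\mathbf{H}_{SM}^{(i)}=\mathcenc_i(\sum\nolimits_{1\leq k\leq q}\mathbf{H}_k^{(i)})$ is a single quantity computed once, and that by the last two lines of \equref{eq:enfa-conv-share} the \emph{same} entry $\mathbf{H}_{SM}^{(i)}[v]$ is added both to $\mathbf{H}_o^{(i)}[v]$ and to $\mathbf{H}_j^{(i)}[v]$, irrespective of whether $\mathcenc_i$ is the identity or an MPNN on $G$.

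Then I would simply add $\mathbf{H}_{SM}^{(i)}[v]$ to both sides of the assumed equality: from $\mathbf{H}_j^{(i)}[v]=\mathbf{H}_o^{(i)}[v]$ we obtain $\mathbf{H}_j^{(i)}[v]+\mathbf{H}_{SM}^{(i)}[v]=\mathbf{H}_o^{(i)}[v]+\mathbf{H}_{SM}^{(i)}[v]$, which is exactly the updated pair of embeddings. Hence the equivalence between the subgraph embedding and the auxiliary original-graph embedding of $v$ is preserved across the subgraph message passing sublayer, which is the content of the proposition.

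The step requiring the most care is not the algebra but the justification that the \emph{same} correction term is legitimately added to both the subgraph and the original graph. This rests on the design choice in \equref{eq:enfa-conv-share} of maintaining the imaginary original-graph embeddings $\mathbf{H}_o^{(i)}$ and feeding them the identical $\mathbf{H}_{SM}^{(i)}$, so that $\mathbf{H}_o$ keeps tracking what a far-away node would hold were the original graph itself treated as a subgraph. I would emphasize that the proposition deliberately presupposes $\mathbf{H}_{SM}^{(i)}$ has already been formed; the complementary fact that this {\framework}-computed term equals the conventional subgraph GNN's correction (because $\sum\nolimits_{1\leq k\leq q}\mathbf{H}_k^{(i)}$ agrees entry-wise under the induction hypothesis) is carried by the surrounding induction in \thmref{thm:extension} rather than by this proposition. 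With \propref{prop:add-ce} in hand, it slots into that induction exactly as \lemref{lem:correctness-ltolp1} did for the plain MPNN layer: after the MPNN sublayer the replacement step restores equality for the nodes in $V_j^{>i}$, and \propref{prop:add-ce} shows this equality survives the subsequent uniform addition, letting the induction advance to layer $i+1$.
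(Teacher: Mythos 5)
Your proof is correct and takes essentially the same approach as the paper's: both rest on the single observation that $\mathbf{H}_{SM}^{(i)}$ is one graph-level quantity (one embedding per node of $G$, shared across all subgraphs) that is added uniformly to the subgraph embeddings and to $\mathbf{H}_o^{(i)}$, so the assumed equality is preserved by adding the same term to both sides. Your closing remark is also the right reading of the proposition's scope: showing that this {\framework}-computed $\mathbf{H}_{SM}^{(i)}$ coincides with the conventional subgraph GNN's correction is left to the induction in \thmref{thm:extension}, exactly as in the paper.
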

\begin{proof}
The embeddings from subgraph message passing layers are graph-level embeddings, meaning the embeddings of a node in different subgraphs have the same value.
Thus, the addition of subgraph message passings to both the subgraphs and the original graph does not disrupt the equivalence of a node's embeddings in the subgraphs and the original graph.
\end{proof}

% \thmref{thm:extension} differs from \thmref{thm:correctness} in that the internal embeddings in the original graph are also affected by the information-sharing component.
% We also prove \thmref{thm:extension} through an induction.

\propref{prop:add-ce} provides some insight into why we add $\mathbf{H}_{SM}^{(i)}$ to both $\mathbf{H}^{(i)}$ and $\mathbf{H}_o^{(i)}$.
We are now ready to prove the theorem through induction.
For the base case $l=1$, the internal embeddings $\mathbf{H}^{(1)}$ are identical in {\framework} and subgraph GNNs before the subgraph message passing layer, that is, before the fourth line of \equref{eq:enfaext-share}, as discussed in \propref{remark:correctness-init}.
Based on \propref{prop:add-ce}, \equref{eq:enfaext-share} could further generate embeddings identical to those of subgraph GNNs after the subgraph message passing layer.
% Besides, the nodes in $V_j^{>i}$ share identical embeddings after \equref{eq:enfaext-share}, because they add a same value after the replacement ({\ie} after the third line of \equref{eq:enfaext-share}).

Now consider a general case where {\framework} has achieved identical internal embeddings to those of subgraph GNNs after the $(l-1)$-th layers.

\begin{lemma}
    \label{lem:ext-correctness}
    Assuming \equref{eq:enfaext-init}-\equref{eq:enfaext-pool} could produce the same embeddings in the subgraph $G_j^{L+1}$ as those in $G_j$ in subgraph GNNs and it holds $\mathbf{H}_o^{(l-1)}[V_j^{>l-1}]=\mathbf{H}^{(l-1)}[V_j^{>l-1}]$ after $l-1$ iterations, then this also holds after the $l$-th iteration for $l\leq L$.
\end{lemma}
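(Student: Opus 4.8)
The plan is to follow the induction behind \lemref{lem:correctness-ltolp1}, while carrying the additional invariant $\mathbf{H}_o^{(l)}[V_j^{>l}]=\mathbf{H}^{(l)}[V_j^{>l}]$ through the subgraph message passing step by means of \propref{prop:add-ce}. I would split the $l$-th iteration of \equref{eq:enfaext-share} into two phases: the MPNN convolution together with the pivot-hop replacement (its first three lines), and then the subgraph message passing and its symmetric addition to $\mathbf{H}_o^{(l)}$ and to every $\mathbf{H}_j^{(l)}$ (its last three lines). The aim is to show that both invariants survive each phase.

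For the first phase I would argue exactly as in \lemref{lem:correctness-ltolp1}. By the inductive hypothesis the inputs $\mathbf{H}^{(l-1)}$ and $\mathbf{H}_o^{(l-1)}$ coincide with the corresponding quantities of the subgraph GNN, so I split the nodes of $G_j^{L+1}$ by pivot hop. For a node with pivot hop at most $l$, its local structure in $G_j^{L+1}$ equals that in $G_j$ because $l\le L$ and the ego net is $(L+1)$-hop, so $\mathlayer_l$ reproduces the subgraph-GNN value before the fourth line of \equref{eq:enfaext-share}. For a node $v\in V_j^{>l}$, the $l$-hop ego net of $v$ is disjoint from $pvt_j$, so the convolution on $G$ agrees with that on $G_j$; the second inductive invariant supplies matching inputs at $v$ and all of its neighbors (which lie in $V_j^{>l-1}$), and the replacement $\mathbf{H}_j^{(l)}[V_j^{>l}]=\mathbf{H}_o^{(l)}[V_j^{>l}]$ therefore both reproduces the subgraph-GNN value and re-establishes $\mathbf{H}_o^{(l)}[V_j^{>l}]=\mathbf{H}^{(l)}[V_j^{>l}]$ at level $l$, again before the message passing.

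For the second phase I would invoke \propref{prop:add-ce}. Having matched the pre-message-passing embeddings to those of the subgraph GNN, the quantity fed to $\mathcenc_l$ agrees with the subgraph GNN's, so {\framework} computes the identical graph-level vector $\mathbf{H}_{SM}^{(l)}$. Since this same vector is added to $\mathbf{H}_o^{(l)}$ and to each $\mathbf{H}_j^{(l)}$, \propref{prop:add-ce} preserves the first invariant, and the equality on $V_j^{>l}$ is undisturbed because an identical vector is added on both sides. Combining the two phases closes the induction, which can then be run up to $l=L$.

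The step I expect to be the main obstacle is the bookkeeping around the summation defining $\mathbf{H}_{SM}^{(l)}$: the subgraph GNN sums node embeddings over the \emph{full} subgraphs, whereas {\framework} only materializes the ego nets. The argument must make explicit that the contribution of nodes outside the ego nets—whose pivot hop exceeds $L+1$ and which therefore carry exactly the original-graph embeddings—is recovered through $\mathbf{H}_o$, so that the two summations coincide and the common $\mathbf{H}_{SM}^{(l)}$ is reconstructed correctly. This is precisely why \equref{eq:enfaext-share} keeps $\mathbf{H}_o$ as a running stand-in, adding $\mathbf{H}_{SM}^{(l)}$ to it rather than treating it as a pure convolution: maintaining this faithful copy of the far region of every subgraph—including nodes beyond the ego-net boundary—is what keeps the replacement exact at the next level and lets the induction go through.
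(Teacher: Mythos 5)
Your proof is correct and follows essentially the same route as the paper's own: the same two-phase split of each iteration (MPNN convolution plus pivot-hop replacement, then the subgraph message passing addition), the same case analysis on pivot hops reusing the \thmref{thm:correctness} argument, and the same use of \propref{prop:add-ce} together with the auxiliary invariant $\mathbf{H}_o^{(l)}[V_j^{>l}]=\mathbf{H}^{(l)}[V_j^{>l}]$ to carry both equalities through the addition of $\mathbf{H}_{SM}^{(l)}$. Your closing remark about the bookkeeping of $\sum_{1\leq k\leq q}\mathbf{H}_k^{(l)}$ --- that the contribution of nodes outside the materialized ego nets must be recovered from $\mathbf{H}_o^{(l)}$ for {\framework}'s $\mathbf{H}_{SM}^{(l)}$ to coincide with the subgraph GNN's --- identifies a genuine subtlety that the paper's proof leaves implicit (it simply asserts both models ``add the same $\mathbf{H}_{SM}^{(l)}$''), so making it explicit only strengthens the argument.
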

\begin{proof}
    By the assumption, the input embeddings to the $l$-th MPNN layer remain the same in {\framework} and the original graph\eat{ for the nodes in $V_j^{>l-1}$}.
    Similarly to the discussion in \thmref{thm:correctness}, consider the internal embeddings from the $l$-th MPNN layer \textit{before} the subgraph message passing layer, {\ie} the embeddings after the third line in \equref{eq:enfaext-share}.
    \begin{itemize}
        \item For the nodes with pivot hops in $G_j$ no greater than $l$, their internal embeddings are not replaced.
        Similarly to the discussion in \thmref{thm:correctness}, the structure around these nodes does not change in $G_j^{L+1}$ and $G_j$ and {\framework} could generate the same embeddings $\mathbf{H}^{(i)}$ before the subgraph message passing layer as in subgraph GNNs.
        \item Consider the nodes with pivot hops in $G_j$ larger than $l$, {\ie} $V_j^{>l}$.
        Their neighbors are always in $V_j^{l-1}$.
        Since $V_j^{l-1}$'s embeddings in $\mathbf{H}_o^{(l-1)}$ and $\mathbf{H}^{(l-1)}$ are the same, the embeddings of nodes in $V_j^{>l}$ are identical in $\mathbf{H}_o^{(l)}$ and $\mathbf{H}^{(l)}$ before the subgraph message passing layer.
        Thus, after the placement, {\framework} could generate the same embeddings before the subgraph message passing layer as in subgraph GNNs.
    \end{itemize}
    The above discussion guarantees that {\framework} produces the same internal embeddings as subgraph GNNs and it holds $\mathbf{H}_o^{(l)}[V_j^{>l}]=\mathbf{H}^{(l)}[V_j^{>l}]$ before the subgraph message passing layer in $l$-th iteration.

    Now consider how the subgraph message passing layer affects the internal embeddings.
    $\mathbf{H}^{(l)}$ in both {\framework} and subgraph GNNs add the same $\mathbf{H}_{SM}^{(l)}$, and thus {\framework} could produce the same internal embeddings after the $l$-th subgraph message passing layer.
    Besides, notice that $\mathbf{H}_o^{(l)}$ also adds $\mathbf{H}_{SM}^{(l)}$. Since the embeddings of $V_j^{>l-1}$ in $\mathbf{H}_o^{(l)}$ come from $\mathbf{H}^{(l)}$ and $V_j^{>l}\subset V_j^{>l-1}$, it also guarantees $\mathbf{H}_o^{(l)}[V_j^{>l}]=\mathbf{H}^{(l)}[V_j^{>l}]$ after the subgraph message passing layer.
\end{proof}
    From an induction of \lemref{lem:ext-correctness}, the internal embeddings after $L$ layers in {\framework} are identical to those in subgraph GNNs.
    Thus, {\framework} could generate the same $Y$ as subgraph GNNs.

\section{Discussions on Other Variants}
\label{sec:append-discuss}
\subsection{Subgraph GNNs with Ego-Net Policy}
\label{subsec:append-discuss-egonet}
Bevilacqua~{\etal}~\cite{iclr22esan} suggest using the ego nets around the rooted nodes as subgraphs.
We have two remarks.
(1) Both {\framework} and subgraph GNNs with ego-net policy perform message passing on the ego nets around the rooted nodes.
As such, {\framework} may not exhibit efficiency improvements for the ego-net policy.
However, even though both approaches take ego nets as input, (2) {\framework} essentially differs from subgraph GNNs with the ego net policy.
By identifying and eliminating redundant computations through the convolutions on the original graph and ego nets, {\framework} can produce identical outputs to arbitrary policies, particularly to those using the entire graph as subgraphs, while only needing to perform message passing on small ego nets.  

\subsection{Subgraph GNNs with Edge-Marking Policy}
A recent work~\cite{arxiv23count} suggests adding the structural information of the subgraph, such as degree histogram and distance encodings, among others, to the features of the \textit{rooted edge}. We refer to this approach as the edge-marking policy.
A subgraph GNN adopting an edge-marking policy can be expressed as follows:
\begin{align}
    \label{eq:subgraphGNN-em}
    &\mathsub_{EM}(G)=\{(V,E,\mathbf{X},\mathcat(\mathbf{X}_e,\mathbf{I}_{r,s}))|(v_r,v_s)\in E\} \notag \\
    &\overline{E}, \mathbf{H}^{(0)}, \overline{\mathbf{X}}_e=\mathcat(\mathsub(G)) \notag \\
    &\mathbf{H}^{(i)}=\mathlayer_i(\overline{E}, \mathbf{H}^{(i-1)}, \overline{\mathbf{X}}_e) \text{ for } 1\leq i\leq L \notag \\
    &Y=\mathpool(\mathbf{H}^{(L)})
\end{align}
\noindent 
where $\overline{\mathbf{X}}_e$ refers to the concatenated features of edges from all subgraphs, and $\mathbf{I}_{r,s}$ represents the extra features, where the values for the edge $(v_r,v_s)$ are marked as the structural information, while all other edges are marked with $0$.

In edge-marking policy, only $v_r$ and $v_s$ initially have different features in the subgraph $(V,E,\mathbf{X},\mathcat(\mathbf{X}_e,\mathbf{I}_{r,s}))$ compared to the original graph, thereby making $\{v_r,v_s\}$ the pivot nodes in a manner to the edge-deleting policy.
{\framework} could also produce exact outputs via convolution on ego nets and is anticipated to be both computationally and storage efficient.

\newcommand{\mathind}{\mathtt{Ind}}
\subsection{Subgraph GNNs with Re-Construction Policy}
Previous work~\cite{nips21reconstruction} suggests using $k$-reconstruction of the original graph as the subgraphs.
Formally, let $\mathcal{S}_V^{(k)}$ represent the set of all $k$-size subsets from the vertex set $V$ and let $\mathind_G(S)$ denote the induced edges by a subset of vertex, $S$, from the original graph $G$.
Then the reconstruction policy can be formally defined by $\mathsub_{RC}(G)=\{(S,\mathind_G(S),\mathbf{X}[S])|S\in \mathcal{S}_V^{(k)}\}$, where $\mathbf{X}[S]$ represents the features of nodes in $S$.

For improved readability, we use $n=|V|$ to represent the number of nodes.
When $k=n-1$, the re-construction policy degrades to the node-deleting policy.
Now consider $k>1$.
Each deleted node results in at most $d$ pivot nodes where $d$ is the maximum degree in the graph.
Thus, each subgraph contains at most $(n-k)d$ pivot nodes, and each pivot node results in an $(L+1)$-hop ego net.
The upper bound of the edge size of the subgraphs by {\framework} will be $\binom{n}{k}(n-k)d^{L+2}$.
In comparison, we can obtain the exact size of subgraph edges in the reconstruction policy.
\begin{lemma}
    The size of edges from the subgraphs in $k$-reconstruction policy is $|E|\binom{n-2}{k-2}$.
\end{lemma}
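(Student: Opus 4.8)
The plan is to count the total number of subgraph edges by a double-counting argument that exchanges the order of summation: rather than summing the edge count over all $\binom{n}{k}$ induced subgraphs directly, I would fix an edge of $G$ and count in how many subgraphs it survives.

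First I would observe that for a vertex subset $S\in\mathcal{S}_V^{(k)}$, the induced subgraph $\mathind_G(S)$ retains exactly those edges $(v_r,v_s)\in E$ both of whose endpoints lie in $S$. Hence the total number of subgraph edges equals $\sum_{S\in\mathcal{S}_V^{(k)}}|\{(v_r,v_s)\in E : v_r,v_s\in S\}|$, and by swapping the two sums this equals $\sum_{(v_r,v_s)\in E}|\{S\in\mathcal{S}_V^{(k)} : v_r,v_s\in S\}|$.

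Next I would evaluate the inner count for a fixed edge $(v_r,v_s)$. A $k$-subset $S$ contains both endpoints precisely when the remaining $k-2$ vertices of $S$ are chosen among the other $n-2$ vertices of $V$, which can be done in $\binom{n-2}{k-2}$ ways. The key point is that this quantity does not depend on which edge was chosen, so summing over all $|E|$ edges factors cleanly to give $|E|\binom{n-2}{k-2}$, as claimed.

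Since the whole argument is a clean instance of double counting, I do not expect a genuine obstacle. The only points requiring care are to justify factoring $|E|$ out of the sum by confirming that the per-edge multiplicity $\binom{n-2}{k-2}$ is identical for every edge, and to invoke the standing assumption $k>1$ (so that $k-2\geq 0$ and the binomial coefficient is well-defined and nonzero), which is exactly the regime the lemma addresses.
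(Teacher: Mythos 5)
Your proposal is correct and follows essentially the same argument as the paper: fix an edge $(v_r,v_s)\in E$, observe that it survives in exactly those subgraphs whose vertex set is $\{v_r,v_s\}\cup S_{k-2}$ with $S_{k-2}\subseteq V\setminus\{v_r,v_s\}$, giving $\binom{n-2}{k-2}$ occurrences per edge independently of the edge's structure, and sum over all $|E|$ edges. Your explicit framing as a double-counting/summation-exchange and the remark about $k>1$ are just slightly more formal presentations of the identical idea.
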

\begin{proof}
    Consider an edge $(v_r,v_s)\in E$.
    We are interested in how many subgraphs the edge exists in.
    In fact, it exists in the subgraphs with node set $\{v_r,v_s\}\cup S_{k-2}$ where $S_{k-2}\in V-\{v_r,v_s\}$ and $|S_{k-2}|=k-2$.
    This indicates that the $(v_r,v_s)$ emerges $\binom{n-2}{k-2}$ times, which is irrelevant to the structures of $v_r$ and $v_s$.
    Thus, the total size of edges from the subgraphs is $|E|\binom{n-2}{k-2}$.
\end{proof}

Thus, we can roughly estimate when {\framework} might employ fewer edges in the subgraphs than the reconstruction policy.
By letting $\binom{n}{k}(n-k)d^{L+2}\leq |E|\binom{n-2}{k-2}$, we find that {\framework} could yield less subgraph edges for graphs satisfying $|E|\geq n(n-1)(n-k)d^{L+2}/k(k-1)$.
As $k$ increases, the value of $(n-k)/k(k-1)$ becomes smaller, which means {\framework} is more suitable for larger $k$, \eg $k$ close to $n$.

\eat{$|E|\geq 2n(n-1)d^{L+2}/(n-2)(n-3)$ when $k=2$ and those satisfying $|E|\geq 3n(n-1)d^{L+2}/(n-3)(n-4)$ when $k=3$.
Moreover, estimating the upper bound of subgraph edges in {\framework} for a general $k$ is non-trivial and is reserved for future research.}

\subsection{Subgraph GNNs with Subsampling Strategies}
Previous studies~\cite{iclr22esan,nips22osan,nips23mag} apply subsampling strategies to subgraphs to enhance the efficiency of subgraph GNNs.
These subsampling strategies reduce the number of subgraphs input to subgraph GNNs, enabling the subgraph GNNs to be trained in an efficient, albeit less accurate, manner. 
A subgraph GNN with subsampling can be formally defined by replacing the first line of Equ. 1 with the following:
\begin{align}
\overline{E},\mathbf{H}^{(0)}=\mathcat(\mathsamp(\mathsub(G))) \notag
\end{align}
where $\mathsamp(\cdot)$ is the subsampling function that selects a subset of subgraphs from $\mathsub(G)$.
Given that our work focuses on reducing the size of each subgraph while maintaining the exact output of subgraph GNNs, {\framework} can be readily applied to subgraph GNNs with subsampling strategies by executing the same subsampling strategy on ego nets within {\framework}.
Formally, {\framework} can precisely imitate the behavior of a subgraph GNN with a subsampling strategy by replacing Equ. 3 with
\begin{align}
    \mathbf{H}_o^{(0)}=\mathbf{X};\text{  } \overline{E},\mathbf{H}^{(0)}=\mathcat(\mathsamp(\mathesub_{L+1}(pvt))). \notag
\end{align}
Correspondingly, as the size of subgraphs is diminished, {\framework} could also decrease the storage space and enhance the efficiency of subgraph GNNs with subsampling strategies.

\begin{figure}
\caption{Example Code of Replacement}
\label{fig:egcode}
\begin{lstlisting}[language=Python]
# h.shape: (pq,f)
# ho,shape: (p,f)
# pvt.shape: (pq)
ind=torch.arange(p).repeat(q)
ho=GNNLayer(o_edge_index, ho)
h=GNNLayer(sub_edge_index, h)
h[pvt>i]=ho[ind[pvt>i]]
# other code
\end{lstlisting}
\end{figure}

\section{Implementation of {\framework}}
{\framework} is developed using PyTorch\footnote{https://pytorch.org} and PyG\footnote{https://github.com/pyg-team/pytorch\_geometric}.
Thanks to the comprehensive tensor operators provided by PyTorch, {\framework} can be implemented efficiently. 
\figref{fig:egcode} presents a sample code for the $i$-th subgraph GNN layer in {\framework}.
Recall that $p$ represents the number of nodes in the original graph and $q$ denotes the number of subgraphs.
$\mathtt{o\_edge\_index}$ and $\mathtt{sub\_edge\_index}$ correspond to the edges of the original graph and subgraphs, respectively.
$\mathtt{ho}$ and $\mathtt{h}$ store the hidden states of the original graph and subgraphs (\ie $\mathbf{H}_o^{(i)}$ and $\mathbf{H}^{(i)}$), respectively.
$\mathtt{pvt}$ represents the pivot hops of each node across all subgraphs.
Notice that $\mathtt{h}$ (respectively $\mathtt{pvt}$) has its first dimension shape $pq$, where the $(s*p+r)$-th entry represents the hidden states (respectively pivot hop) of the $r$-th node in the $s$-th subgraph.

In {\framework}, as illustrated in lines 5-6 in \figref{fig:egcode}, both $\mathtt{h}$ and $\mathtt{ho}$ are input into the GNN layer based on the edges of subgraphs (\ie $\mathtt{sub\_edge\_index}$) and the original graph (\ie $\mathtt{o\_edge\_index}$), respectively.
Then, as shown in line 7, nodes with a pivot hop greater $i$ are filtered by $\mathtt{pvt}>i$.
Note that we use an index tensor $\mathtt{ind}$ with shape $(pq)$ to establish the one-to-many mapping from $\mathtt{ho}$ to $\mathtt{h}$, which avoids extensive duplication of $\mathtt{h}$.
Specifically, the values of $\mathtt{ind}$ are $(0,1,2,...,q,0,1,2,...,q,...)$.
Thus, $\mathtt{ind}[\mathtt{pvt}>i]$ are the indices of those nodes that need to be replaced in the \emph{original graph} and $\mathtt{ho}[\mathtt{ind}[\mathtt{pvt}>i]]$ precisely represents the hidden states from the original graph that are to be replaced.
The employment of $\mathtt{ind}$ is advantageous. Firstly, by employing $\mathtt{ind}$, we avoid the duplication of $\mathtt{h}$ when mapping the hidden states from the original graph to subgraphs, which is space-efficient.
Besides, since $\mathtt{ind}$ is constant, it can be preprocessed before the model training begins.

\section{Supplementary Experiments}
\subsection{Detailed Experimental Settings}
\label{subsec:append-detial-exp-setting}

\begin{table*}[t!]
    \centering
    \small{
    \begin{tabular}{|c|c|c|c|c|c|c|c|}
        \hline
        \multirow{2}{*}{Datasets} & \multirow{2}{*}{Metric}  & \multicolumn{2}{|c|}{node-marking}  & \multicolumn{2}{|c|}{node-deleting} & \multicolumn{2}{|c|}{edge-deleting} \\
        \cline{3-8}
        ~ & ~ &  ESAN  &  ENFA & ESAN & ENFA  & ESAN & ENFA \\
        % \hline
        % MUTAG   &  Acuu. & $\mathbf{89.9\pm 3.7}$ & $89.9\pm 5.5$  & $88.3\pm 4.0$ &  $0.0$ & $89.4\pm 4.8$ & $\mathbf{89.9\pm 5.0}$ & $88.8\pm 5.1$  & $+0.5$ \\
        \hline
        PROTEINS  & Accu. & $76.9\pm 3.9$  & $\mathbf{77.0}\pm 4.8$ &  $77.1\pm 4.6$  & $76.6\pm 4.3$ & $76.8\pm 4.6$  &  $76.6\pm 3.7$  \\
         \hline
         molhiv & ROC  & $76.7\pm 2.4$ & $76.4\pm 2.6$ & $76.2\pm 1.0$ & $\mathbf{76.7}\pm 1.3$ & $76.4\pm 2.1$  &  $\mathbf{76.6}\pm 2.5$\\
         \hline
         moltox21 & ROC  & $75.5\pm 1.2$ & $\mathbf{75.7}\pm 0.8$ & $75.8\pm 0.3$ & $75.2\pm 0.7$ & $75.1\pm 0.5$ & $\mathbf{75.6}\pm 1.0$\\
         \hline  
         EXP & Accu. & $100 \pm 0.0$ & $\mathbf{100}\pm 0.0$ & $100\pm 0.0$ & $\mathbf{100}\pm 0.0$ & $100 \pm 0.0$ & $\mathbf{100}\pm 0.0$ \\
         \hline
         CEXP & Accu. & $100 \pm 0.0$ & $\mathbf{100}\pm 0.0$ & $100 \pm 0.0$ & $\mathbf{100}\pm 0.0$ & $100\pm 0.0$ & $\mathbf{100}\pm 0.0$ \\
         \hline
         Pep-func & AP & $57.2\pm 0.8$ & $56.8 \pm 0.7$ & $57.3 \pm 1.0$ & $\mathbf{57.5} \pm 0.7$ & $57.8 \pm 0.5$ & $57.5 \pm 0.6$ \\
         \hline
    \end{tabular}
    }
    \caption{Best Performance of ENFA and the baseline (\textbf{bold}: {\framework} no worse than the baseline).}
    \label{tab:exp-best-perf}
\end{table*}

\begin{table}[t!]
    \centering
    \begin{tabular}{|c|c|c|c|}
        \hline
        Datasets & \#Graph & Avg. \#Nodes & Avg. \#Edges \\
        \hline
        % MUTAG   &   188     &  17.9     &       19.8 \\
        % \hline
        PROTEINS&   1,113    &   39.1    &       72.8 \\
         \hline
        % NIC1    &   4,110    &   29.9    &       32.3 \\
        %  \hline
        % NCI109  &   4,127    &   29.7    &       32.1  \\
        %  \hline
        ogbg-molhiv&41,127   &   25.5    &       27.5  \\
         \hline
        ogbg-moltox21&7,831  &   18.6    &       19.3 \\
         \hline
        EXP     &   1,200    &   44.4    &       55.1  \\
         \hline
        CEXP    &   1,200    &   55.8    &       69.8   \\
         \hline
         peptides-func & 15,535 & 150.9  &  307.3  \\
         \hline
    \end{tabular}
    \caption{Statistics of datasets.}
    \label{tab:dataset-stat}
\end{table}

\textbf{Datasets.}
Statistics for the datasets can be found in \tabref{tab:dataset-stat}. 
In addition to the commonly utilized datasets for testing the performance of GNN models in graph classification, including PROTEINS, ogbg-molhiv, ogbg-moltox21, EXP, and CEXP, we also employ the dataset peptides-func, which contains larger and denser graphs and exhibits long-range dependencies.
The splits of train and test data for the $5$ datasets, bar peptides-func adheres to previous works~\cite{iclr18powerful,iclr22esan}, while we choose the official splits~\cite{nips22lrgb} for peptides-func.

\stitle{Training Setup.}
For conventional subgraph GNNs, the models are trained with a batch size of $32$.
However, for {\framework}, we deliberately choose a larger batch size, ensuring that the maximum data size among all batches for {\framework} does not surpass the maximum data size of a batch while training subgraph GNNs.
The model for the peptides-func dataset is trained for $100$ epochs while the epoch settings for other $5$ datasets follow the guidelines set in \cite{iclr22esan} ({\ie} $350$ for PROTEINS, EXP and CEXP, and $100$ for ogbg-molhiv and ogbg-moltox21).

\stitle{Implementation.}
{\framework} is implemented in PyG.
All experiments are conducted on a machine with $200$GB memory, and an A800 GPU with $80$GB memory.
Each case in the experiment is executed $5$ times and the average is reported.

\subsection{Evaluation on Exact Acceleration}
\label{subsec:append-equivalence}
To validate that {\framework} could produce identical outputs to subgraph GNNs practically, 
we report the best performance of {\framework} and ESAN under identical hyperparameter search strategies.
Except for the peptides-func dataset, we employ a hyperparameter search strategy as utilized in \cite{iclr22esan}.
For peptides-func, we employ $4$ GIN layers as the default setting, choose a learning rate from $\{0.01,0.001\}$, and report the best performance.
This is conducted to determine if {\framework} can match the performance of conventional subgraph GNNs.

\tabref{tab:exp-best-perf} details the optimal performance of ENFA and the baseline under node-marking, edge-deleting, and node-deleting policies.
% The best performance of ENFA and the baseline on each dataset (\ie each row in \tabref{tab:exp-best-perf}) are marked bold.
Overall, ENFA maintains accuracy or ROC that is at least $0.79\%$ lower than the baseline across the $6$ datasets, and it outperforms the baseline in several cases.
Specifically, for the datasets EXP and CEXP, which are designed to measure the expressiveness of the model, we observe that ESAN consistently identifies non-isomorphic graphs as effectively as conventional subgraph GNNs, thereby validating the equivalence.

An interesting observation is that although {\framework} guarantees identical outputs to subgraph GNNs, there is a slight discrepancy in the performance between {\framework} and conventional subgraph GNNs.
This can be attributed to the use of dropout employed in both subgraph GNNs and {\framework}.
Specifically,
after replacing the embeddings of the original graph in the $i$-th layer of {\framework}, the nodes whose embeddings have been replaced, \ie $V_j^{>i}$, will be influenced by the dropout function applied to the \emph{original graph} in the $(i-1)$-th layer.
In contrast, within subgraph GNNs, the embeddings of these nodes will be subjected to dropout independently.

% \emph{Why the results of {\framework} have slight difference from the baseline?}
% A noteworthy phenomenon is that although {\framework} guarantees identical outputs to subgraph GNNs, the performance of {\framework} and the conventional subgraph GNNs has a slight difference.
% This is because the dropout is utilized in subgraph GNNs.
% Specifically, consider the convolution in the $i$-th MPNN layer of {\framework}.
% After the replacement of the embeddings of the original graph, the nodes whose embeddings are replaced, \ie $V_j^{>i}$, will be affected by the dropout function in the $i-1$-th layer.
% While in subgraph GNNs, the embeddings of these nodes will be independently dropped out in all layers.

\begin{figure}[t!]
  \centering
  \hspace{10pt}
  \subfloat{\includegraphics[width=0.28\textwidth]{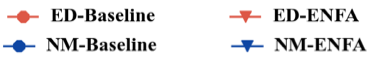}}
  \vspace{-8pt}
  \setcounter{subfigure}{0}
  \subfloat[Ogbg-moltox21]{\includegraphics[width=0.22\textwidth]{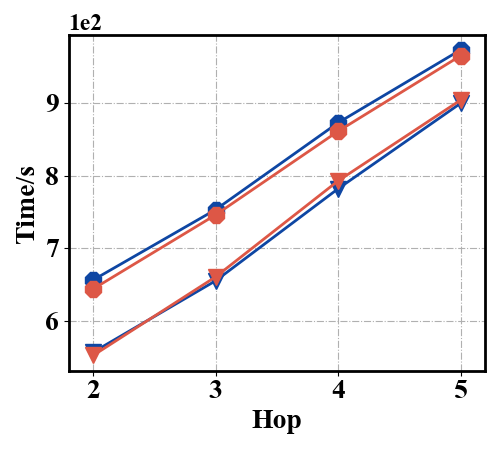}\label{fig:exp-comp-gin-wois-ogbg-moltox}}
   \hfill
  \subfloat[EXP]{\includegraphics[width=0.22\textwidth]{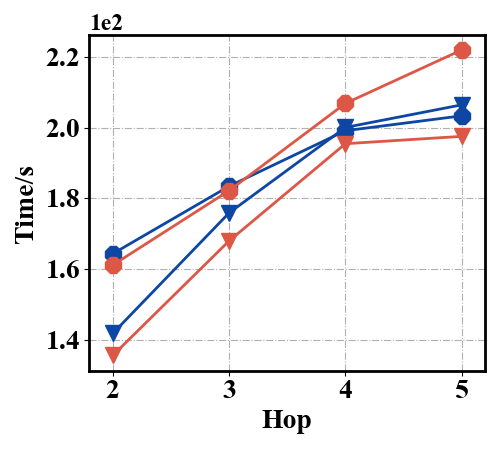}\label{fig:exp-comp-gin-wois-cexp}}
  \newline
  \subfloat[Ogbg-moltox21 w/ SM]{\includegraphics[width=0.22\textwidth]{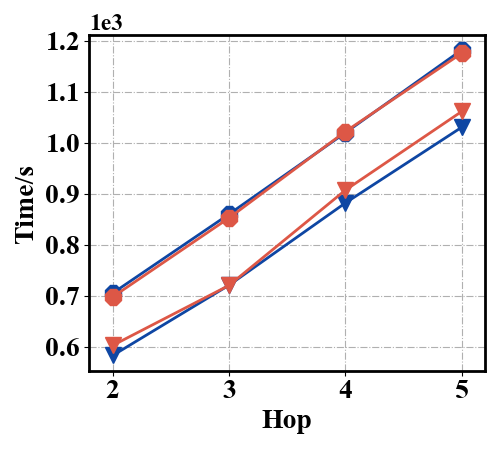}\label{fig:exp-comp-gin-wis-ogbg-moltox}}
  \hfill
  \subfloat[EXP w/ SM]{\includegraphics[width=0.22\textwidth]{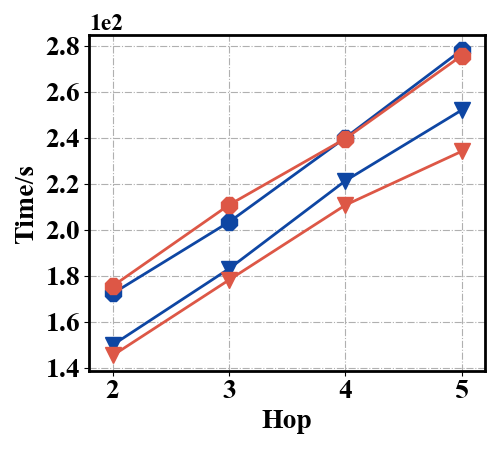}\label{fig:exp-comp-gin-wis-cexp}}
  \caption{Experimental Results on Computation Efficiency (node-marking and edge-deleting policies on datasets ogbg-moltox21 and EXP).}
  \label{fig:exp-computation-app}
\end{figure}

\begin{figure}[t!]
  \centering
  \hspace{10pt}
  \subfloat{\includegraphics[width=0.28\textwidth]{figures/EXP/EFF-Time/EFF-Computation-legend-double.png}}
  \vspace{-10pt}
  \setcounter{subfigure}{0}
  \newline
  \subfloat[PROTEINS (GraphConv)]{\includegraphics[width=0.22\textwidth]{figures/EXP/EFF-Time/EFF-Computation-gc-wois-PROTEINS.png}\label{fig:exp-comp-gc-wois-protein}}
  \hfill
  \subfloat[EXP (GraphConv)]{\includegraphics[width=0.22\textwidth]{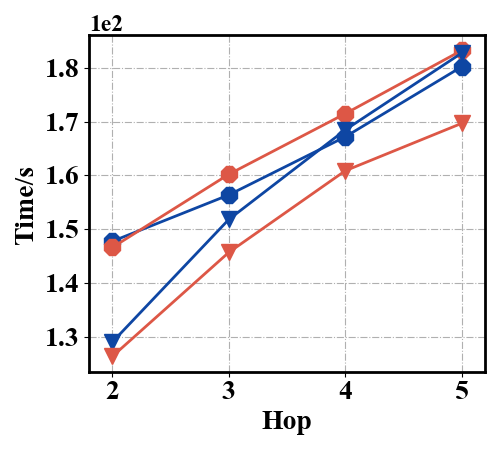}\label{fig:exp-comp-gc-wois-exp}}
   \newline
    \subfloat[CEXP (GraphConv)]{\includegraphics[width=0.22\textwidth]{figures/EXP/EFF-Time/EFF-Computation-gc-wois-CEXP.png}\label{fig:exp-comp-gc-wois-exp}}
    \hfill
   \subfloat[ogbg-molhiv (GCN)]{\includegraphics[width=0.22\textwidth]{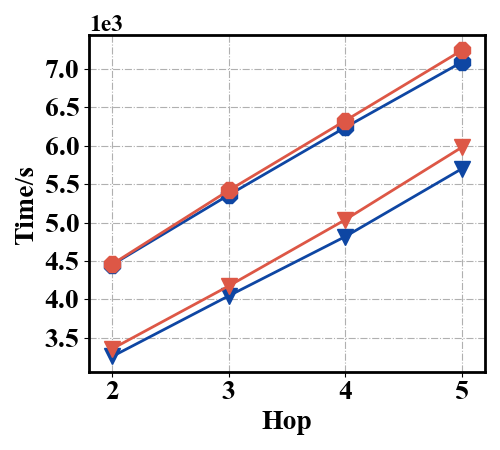}\label{fig:exp-comp-gc-wois-ogbg-molhiv}}
   \newline
   \subfloat[ogbg-moltox21 (GCN)]{\includegraphics[width=0.22\textwidth]{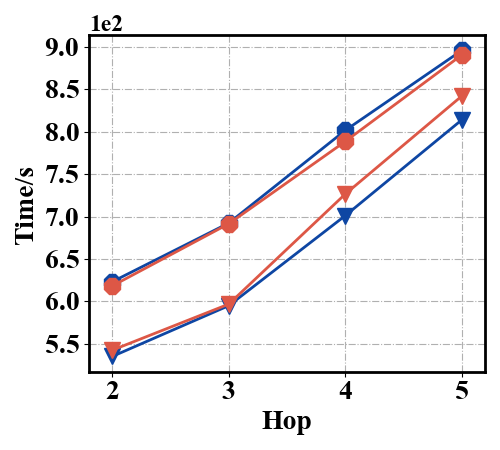}\label{fig:exp-comp-gc-wois-ogbg-moltox}}
   \hfill
  \subfloat[Pep-func (GCN)]{\includegraphics[width=0.22\textwidth]{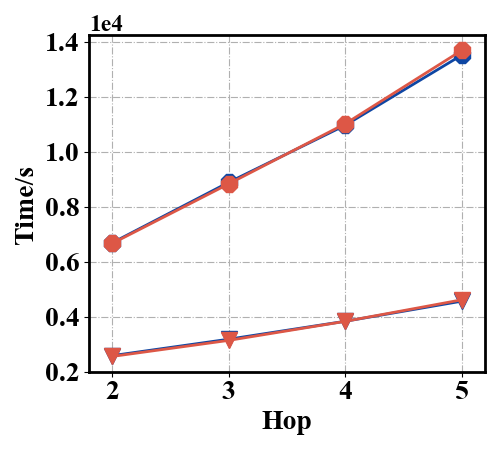}\label{fig:exp-comp-gc-wois-cexp}}
  
  \caption{Experimental Results on Computation Efficiency.}
  \vspace{-10pt}
  \label{fig:exp-computation-gc-wis}
\end{figure}

% \begin{figure*}[t!]
%   \centering
%   \hspace{20pt}
%   \subfloat{\includegraphics[width=0.75\textwidth]{figures/EXP/EFF-Time/EFF-Computation-legend.png}}
%   \vspace{-8pt}
%   \setcounter{subfigure}{0}
%   \subfloat[Ogbg-moltox21]{\includegraphics[width=0.25\textwidth]{figures/EXP/EFF-Time/EFF-Computation-gin-wois-ogbg-moltox21.png}\label{fig:exp-comp-gin-wois-ogbg-moltox}}
%    \hfill
%   \subfloat[EXP]{\includegraphics[width=0.25\textwidth]{figures/EXP/EFF-Time/EFF-Computation-gin-wois-EXP.png}\label{fig:exp-comp-gin-wois-cexp}}
%   \hfill
%   \subfloat[Ogbg-moltox21 w/ SM]{\includegraphics[width=0.25\textwidth]{figures/EXP/EFF-Time/EFF-Computation-gin-wis-ogbg-moltox21.png}\label{fig:exp-comp-gin-wis-ogbg-moltox}}
%   \subfloat[EXP w/ SM]{\includegraphics[width=0.25\textwidth]{figures/EXP/EFF-Time/EFF-Computation-gin-wis-EXP.png}\label{fig:exp-comp-gin-wis-cexp}}
%   \caption{Experimental Results on Computation Efficiency (node-marking and edge-deleting policies on datasets ogbg-moltox21 and EXP).}
%   \label{fig:exp-computation-app}
% \end{figure*}

\subsection{Full Evaluation on Computational Efficiency}
\label{subsec:append-exp-efficiency}

\figref{fig:exp-computation-app} provides the supplementary experiments on the computational efficiency of {\framework} on ogbg-moltox21 and EXP using node-marking and edge-deleting policies.
For conventional subgraph GNNs (\figref{fig:exp-comp-gin-wois-ogbg-moltox} and \figref{fig:exp-comp-gin-wois-cexp}), {\framework} outperforms the baseline in all cases except when employing the node-marking policy on EXP with $4$ and $5$ MPNN layers.
For the subgraph GNNs incorporating subgraph message passings (\figref{fig:exp-comp-gin-wis-ogbg-moltox} and \figref{fig:exp-comp-gin-wis-cexp}), {\framework} outperforms the baseline in all cases.
The integration of additional subgraph message passing layers in the training process leads to a longer training time for both {\framework} and the baseline (\figref{fig:exp-comp-gin-wois-ogbg-moltox} vs. \figref{fig:exp-comp-gin-wis-ogbg-moltox}, \figref{fig:exp-comp-gin-wois-cexp} vs. \figref{fig:exp-comp-gin-wis-cexp}).
However, {\framework} still shows significant time cost improvement compared to the baseline.
This observation is in line with our complexity analysis, which identifies the bottleneck for training models as the stacked MPNN layers on extensive subgraphs. {\framework} continues to exhibit advantages when considering subgraph message passing layers.

The first ({\resp} last) two columns of \figref{fig:exp-computation-gin-nd} report the experimental results on computation efficiency under the \emph{node-deleting policy} without ({\resp} with) the subgraph message passings.
Once again, {\framework} outperforms the baseline in most cases.
Besides, we note that the experimental results under the node-deleting policy are more akin to the node-marking policy than the edge-deleting policy, especially on PROTEINS, EXP, and CEXP.
This is because both node-marking and node-deleting policies generate an equal number of subgraphs for each graph instance (precisely, $|V|$), thereby yielding a similar size of subgraphs for training.

\begin{figure*}[t!]
  \centering
  \hspace{50pt}
  \subfloat{\includegraphics[width=0.34\textwidth]
  {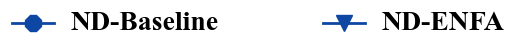}}
  \vspace{-5pt}
  \setcounter{subfigure}{0}
  \newline
  \subfloat[PROTEINS]{\includegraphics[width=0.25\textwidth]{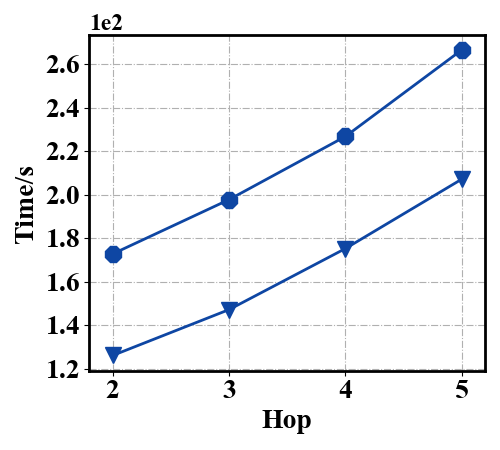}\label{fig:exp-comp-gin-wois-proteins-nd}}
  \hfill
  \subfloat[ogbg-molhiv]{\includegraphics[width=0.25\textwidth]{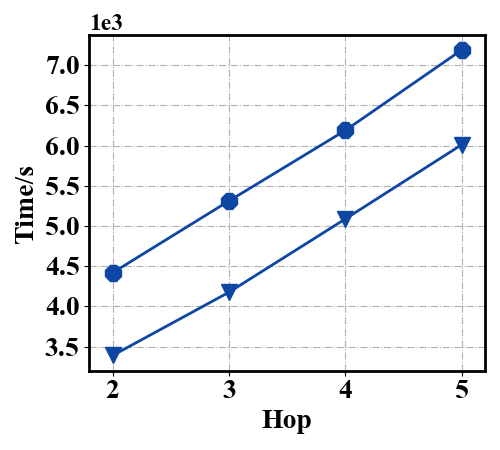}\label{fig:exp-comp-gin-wois-ogbg-molhiv-nd}}
   \hfill
    \subfloat[PROTEINS w/ SM]{\includegraphics[width=0.25\textwidth]{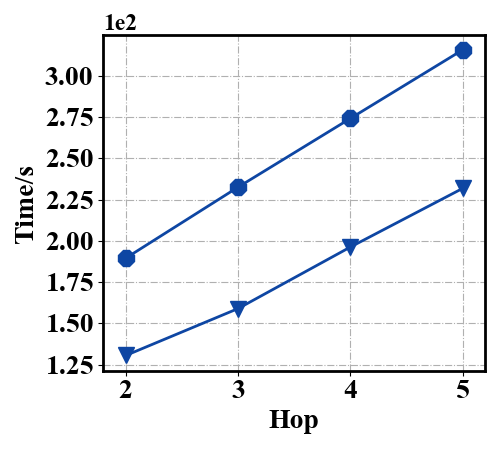}\label{fig:exp-comp-gin-wis-proteins-nd}}
  \hfill
  \subfloat[Ogbg-molhiv w/ SM]{\includegraphics[width=0.25\textwidth]{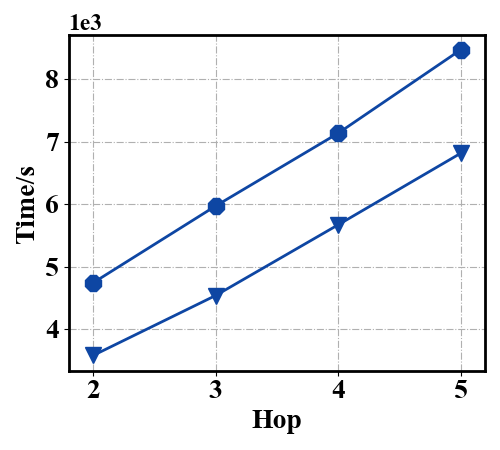}\label{fig:exp-comp-gin-wis-ogbg-molhiv-nd}}
  \hfill
  \newline
  \subfloat[ogbg-moltox21]{\includegraphics[width=0.25\textwidth]{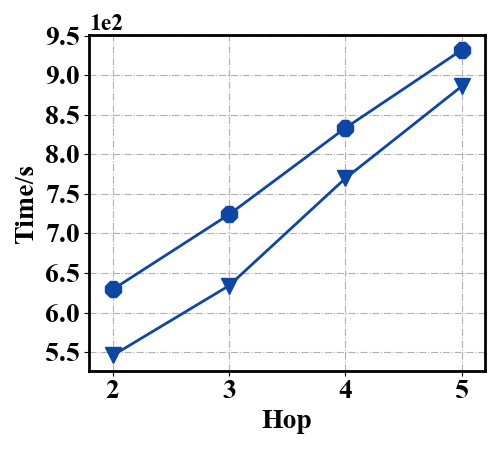}\label{fig:exp-comp-gin-wois-ogbg-moltox-nd}}
  \subfloat[EXP]{\includegraphics[width=0.25\textwidth]{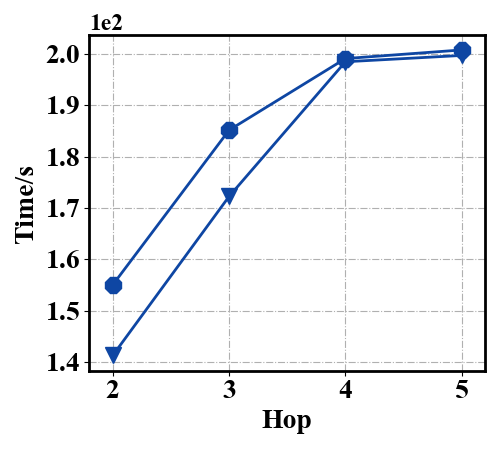}\label{fig:exp-comp-gin-wois-exp-nd}}
  \hfill
  \subfloat[Ogbg-moltox21 w/ SM]{\includegraphics[width=0.25\textwidth]{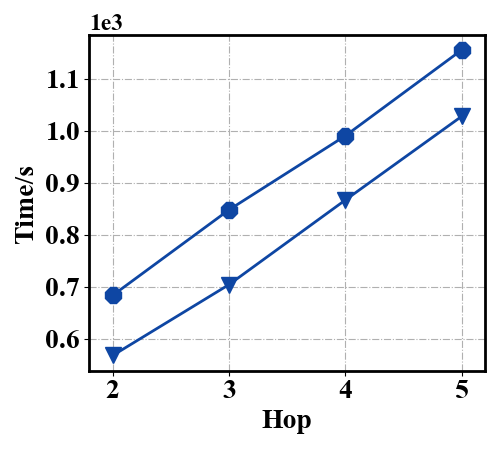}\label{fig:exp-comp-gin-wis-ogbg-moltox-nd}}
  \hfill
  \subfloat[EXP w/ SM]{\includegraphics[width=0.25\textwidth]{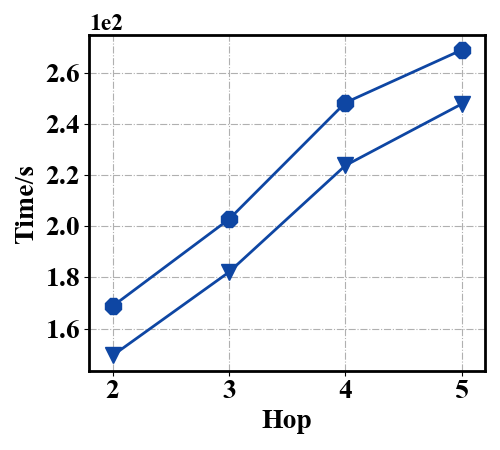}\label{fig:exp-comp-gin-wis-exp-nd}}
  \newline
  \subfloat[CEXP]{\includegraphics[width=0.25\textwidth]{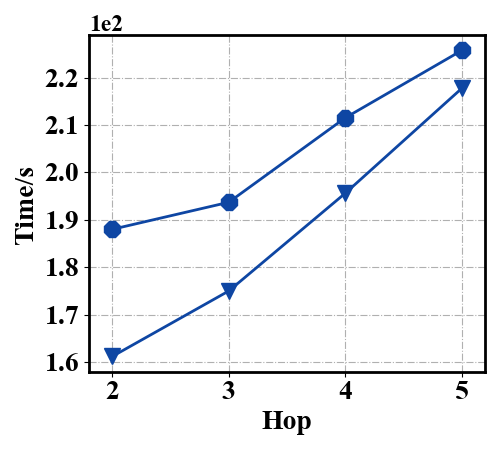}\label{fig:exp-comp-gin-wois-cexp-nd}}
  \subfloat[Pep-func]{\includegraphics[width=0.25\textwidth]{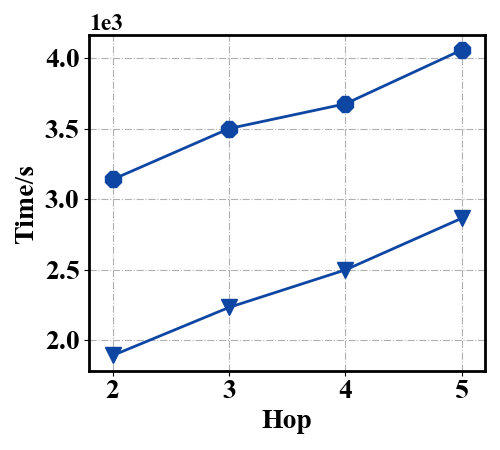}\label{fig:exp-comp-gin-wois-pep-func-nd}}
  \hfill
  \subfloat[CEXP w/ SM]{\includegraphics[width=0.25\textwidth]{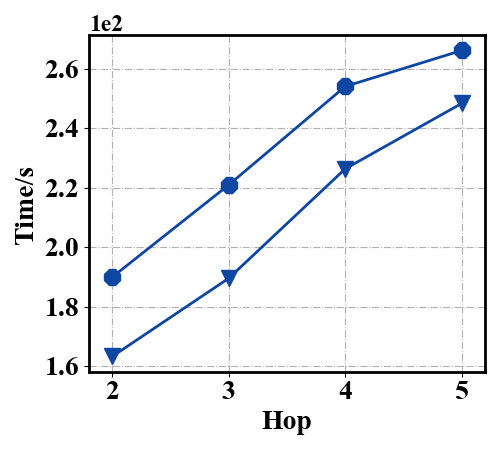}\label{fig:exp-comp-gin-wis-cexp-nd}}
  \hfill
  \subfloat[Pep-func w/ SM]{\includegraphics[width=0.25\textwidth]{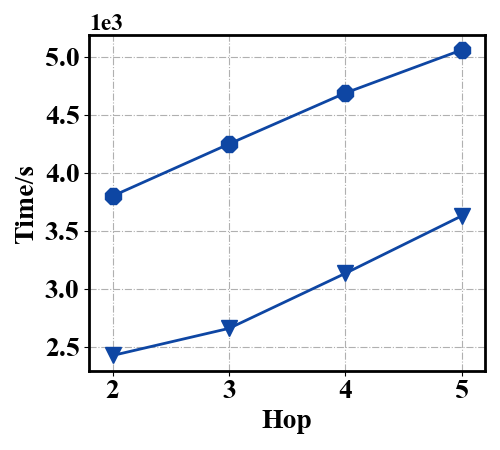}\label{fig:exp-comp-gin-wis-peptides-func-nd}}
  \caption{Experimental Results on Computation Efficiency with node-deleting policy.}
  \label{fig:exp-computation-gin-nd}
\end{figure*}

\subsection{Evaluation on Different MPNN Layers}
\label{subsec:eval-diff-layers}
We further compare {\framework} with subgraph GNNs under different types of MPNN layers to validate the effectiveness of {\framework}.
Following \cite{iclr22esan}, we assess the running time of {\framework} and the baseline with GraphConv layers~\cite{morris2019weisfeiler} for datasets PROTEINS, EXP, and CEXP, and GCN layers~\cite{arxiv16semi} for datasets ogbg-molhiv, ogbg-moltox21, and peptides-func.
\figref{fig:exp-computation-gc-wis} reports the experimental results.

Overall, we observe a trend similar to the experimental results with GIN layers.
In most cases, {\framework} outperforms the baseline, except for the models with $4$-$5$ MPNN layers under the node-marking policy on the EXP dataset.
The time cost difference between the node-marking and edge-deleting policies is more marginal on PROTEINS and CEXP than the other $4$ datasets.
When comparing different MPNN layers (\figref{fig:exp-computation-gc-wis} vs. \figref{fig:exp-comp-gin-wois-protein}-\figref{fig:exp-comp-gin-wois-pepfunc} and \figref{fig:exp-comp-gin-wois-ogbg-moltox}-\figref{fig:exp-comp-gin-wois-cexp}), models with GraphConv and GCN layers require slightly less training time compared to those with GIN layers for the $5$ datasets, except for peptides-func.
However, they are $1.3\times$-$3.3\times$ slower for peptides-func.
This is attributable to the design of GCN layers, which entails more operations to acquire the adjacency matrix scaled by both source nodes and destination nodes.
The graph size of peptides-func is much larger, thereby increasing the time cost with GCN layers.

\end{document}